\newcommand{\tabincell}[2]{\begin{tabular}{@{}#1@{}}#2\end{tabular}}
\newcommand{\MyMapTemplatePrefixc}[4]{\expandafter#1\csname#3#4\endcsname{#2{#4}}} % it remembles a template: \#3#4 --> #2{#4}
\def\tz{\tilde{z}}
\def\ga{\alpha}
\def\gb{\beta}
\def\gep{\epsilon}
\def\gl{\lambda}
\def\gD{\Delta}
\def\gS{\Sigma}
\def\hga{\hat{\alpha}}
\def\hgb{\hat{\beta}}
\def\hgl{\hat{\lambda}}
\def\bbR{{\mathbb R}}
\def\st{\mbox{subject to}}
\newcommand{\trans}{^{T}}
\newcommand{\inv}{^{-1}}
\newcommand{\pinv}{^{\dagger}}
\newcommand{\inprod}[1]{\langle #1 \rangle}
\newcommand{\itk}{^{k}}
\newcommand{\ito}{^{k_0}}
\newcommand{\kp}{^{k+1}}
\newcommand{\km}{^{k-1}}
\newcommand{\subi}{_{i}}
\newcommand{\subk}{_{k}}
\newcommand{\subkp}{_{k+1}}
\newcommand{\opt}{^{*}}
\newcommand{\reso}[1]{\gD #1 ^* }
\newcommand{\resp}[1]{\gD #1 ^+ }
\newtheorem{proposition}{Proposition}
\crefname{proposition}{Proposition}{Proposition}
\newtheorem{theorem}{Theorem}%[section]
\crefname{theorem}{Theorem}{Theorem}
\newtheorem{lemma}{Lemma}%[section]
\crefname{lemma}{Lemma}{Lemma}
\crefname{myrl}{Rule}{Rule}
\newtheorem{assumption}{Assumption}%[section]
\crefname{assumption}{Assumption}{Assumption}
\icmltitlerunning{
Adaptive Consensus ADMM for Distributed Optimization
%alternative: 
%Adaptive Consensus ADMM: $O(1/k)$ convergence and distributed implementation
}
\begin{document} 

\twocolumn[
\icmltitle{
Adaptive Consensus ADMM for Distributed Optimization
%Adaptive Consensus ADMM: $O(1/k)$ convergence and distributed implementation
}

% It is OKAY to include author information, even for blind
% submissions: the style file will automatically remove it for you
% unless you've provided the [accepted] option to the icml2017
% package.

% list of affiliations. the first argument should be a (short)
% identifier you will use later to specify author affiliations
% Academic affiliations should list Department, University, City, Region, Country
% Industry affiliations should list Company, City, Region, Country

% you can specify symbols, otherwise they are numbered in order
% ideally, you should not use this facility. affiliations will be numbered
% in order of appearance and this is the preferred way.
%\icmlsetsymbol{equal}{*}

\begin{icmlauthorlist}
\icmlauthor{Zheng Xu}{umd}
\icmlauthor{Gavin Taylor}{usna}
\icmlauthor{Hao Li}{umd}
\icmlauthor{M\'{a}rio A. T. Figueiredo}{ist}
\icmlauthor{Xiaoming Yuan}{hkb}
\icmlauthor{Tom Goldstein}{umd}
\end{icmlauthorlist}

%\icmlaffiliation{umd}{Department of Computer Science, University of Maryland, College Park, MD, USA}
%\icmlaffiliation{usna}{United States Naval Academy, Annapolis, MD, USA}
%\icmlaffiliation{ist}{Instituto de Telecomunica\c{c}\~{o}es, Instituto Superior T\'{e}cnico, Universidade de Lisboa, Portugal}
%\icmlaffiliation{hkb}{Department of Mathematics, Hong Kong Baptist University, Kowloon Tong, Hong Kong}

\icmlaffiliation{umd}{University of Maryland, College Park;}
\icmlaffiliation{usna}{United States Naval Academy, Annapolis;}
%\icmlaffiliation{ist}{Universidade de Lisboa, Portugal}
\icmlaffiliation{ist}{Instituto de Telecomunica\c{c}\~{o}es, IST, ULisboa, Portugal;}
\icmlaffiliation{hkb}{Hong Kong Baptist University, Hong Kong}

\icmlcorrespondingauthor{Zheng Xu}{xuzhustc@gmail.com}

% You may provide any keywords that you 
% find helpful for describing your paper; these are used to populate 
% the "keywords" metadata in the PDF but will not be shown in the document
\icmlkeywords{ADMM, distributed computing, convergence rate, adaptive stepsize}

\vskip 0.3in
]

% this must go after the closing bracket ] following \twocolumn[ ...

% This command actually creates the footnote in the first column
% listing the affiliations and the copyright notice.
% The command takes one argument, which is text to display at the start of the footnote.
% The \icmlEqualContribution command is standard text for equal contribution.
% Remove it (just {}) if you do not need this facility.

\printAffiliationsAndNotice{}  % leave blank if no need to mention equal contribution
%\printAffiliationsAndNotice{\icmlEqualContribution} % otherwise use the standard text.

\begin{abstract} 
%\small
The alternating direction method of multipliers (ADMM) is commonly used for distributed model fitting problems, but its performance and reliability depend strongly on user-defined penalty parameters.   We study distributed ADMM methods that boost performance by using different fine-tuned algorithm parameters on each worker node. We present a $O(1/k)$ convergence rate for adaptive ADMM methods with node-specific parameters, and propose \emph{adaptive consensus ADMM (ACADMM)},
which automatically tunes parameters without user oversight.
%The alternating direction method of multipliers (ADMM) is well-suited to large-scale distributed convex optimization. We propose adaptive consensus ADMM (ACADMM) that accelerates the practical convergence of ADMM with local spectral penalty parameters. ACADMM tackles the hardness of estimating algorithm parameters when the consensus ADMM is distributed on a large number of compute nodes. Moreover, we present a O(1/k) convergence rate which is applicable to many adaptive methods for ADMM.
\end{abstract} 

\section{Introduction}
The alternating direction method of multipliers (ADMM) is a popular tool for solving problems of the form,
\begin{eqnarray}
\min_{u\in \bbR^n,v\in \bbR^m}  f(u) + g(v),~~~~\st~~  Au+Bv = b, \label{eq:prob}
\end{eqnarray}
where $f:\bbR^n\rightarrow \bbR$ and $g:\bbR^m\rightarrow \bbR$ are convex functions, $A\in \bbR^{p \times n}$, $B\in \bbR^{p \times m}$, and $b \in \bbR^p$.
ADMM was first introduced in \cite{glowinski1975approximation} and
\cite{gabay1976dual}, and has found applications in many optimization problems
in machine learning, distributed computing and many other areas
\cite{boyd2011admm}.

Consensus ADMM \cite{boyd2011admm} solves minimization problems involving a
composite objective $f(v)=\sum_i f_i(v),$ where worker $i$ stores the data
needed to compute $f_i,$ and so is well suited for distributed model fitting
problems \cite{
boyd2011admm,
%lubell2013practical,
%hong2014distributed,
zhang2014asynchronous,
%liavas2015parallel,
song2015fast,
chang2016asynchronous,
goldstein2016unwrapping,taylor2016training}. To distribute this problem, consensus methods
assign a separate copy of the unknowns, $u_i,$ to each worker, and then
apply ADMM to solve
\begin{align}
\min_{u_i\in \bbR^d, v\in \bbR^d} \  \sum_{i=1}^N f_i(u_i) + g(v),~~~~\st~~ u_i = v, \label{eq:cprob}
\end{align}
where $v$ is the ``central'' copy of the unknowns,  and $g(v)$ is a
regularizer. The consensus problem \eqref{eq:cprob} coincides with \eqref{eq:prob} by defining $u=(u_1;\, \ldots; \, u_N)\in \bbR^{dN}$, $A=I_{dN}\in \bbR^{dN \times dN}$, and $B = -(I_d; \,\ldots;\,I_d) \in \bbR^{dN \times d}$, where $I_d$ represents the $d\times d$ identity matrix.

ADMM methods rely on a penalty parameter (stepsize) that is chosen by the user. In theory, ADMM converges for any constant penalty parameter
\cite{eckstein1992douglas,he2012con,ouyang2013stochastic}. In
practice, however, the efficiency of ADMM is highly sensitive to this
parameter choice
\cite{nishihara2015general,ghadimi2015optimal}, and
can be improved via adaptive penalty selection methods
\cite{he2000alternating,song2015fast,xu2016adaptive}. 

One such approach, residual balancing (RB) \cite{he2000alternating}, adapts
the penalty parameter so that the residuals (derivatives of the
Lagrangian with respect to primal and dual variables) have similar magnitudes.
%\citep{song2015fast} recently extend residual balancing to consensus-based
%ADMM for distributed optimization by balancing the local primal and dual
%residuals on each node.\zheng{merged with CRB}  
When the same penalty parameter is used across nodes, RB is known to converge, although without a known rate guarantee.  A more recent approach, AADMM \cite{xu2016adaptive},
achieves impressive practical convergence speed on many applications, including
consensus problems, with adaptive penalty parameters by estimating the local
curvature of the dual functions. However, the dimension of the unknown variables
in consensus problems grows with the number of distributed nodes, causing the
curvature estimation to be inaccurate and unstable. AADMM uses the same convergence analysis as RB. Consensus residual balancing (CRB) \cite{song2015fast} extends residual balancing to consensus-based
ADMM for distributed optimization by balancing the local primal and dual
residuals on each node.  However, convergence guarantees for this method are fairly weak, and adaptive
penalties need to be reset after several iterations to guarantee convergence.

%For our first contribution in this paper, for many forms of
%adaptive ADMM including all the above methods, and under mild
%conditions, we demonstrate a $O(1/k)$ convergence rate.  To our knowledge,
%this is the first convergence rate proven on adaptive ADMM.  
%
%Our second contribution is we propose adaptive consensus ADMM (ACADMM) for solving problem \eqref{eq:cprob}.
%Instead of estimating one global penalty parameter for all workers, different
%local penalty parameters are estimated by the local curvature of subproblems
%on each node. Our theoretical results apply to this method; additionally, we
%study practical performance, and validate both the efficiency
%and robustness of the proposed ACADMM.

We study the use of adaptive ADMM in the distributed setting, where different workers use different local algorithm parameters to accelerate convergence.  We begin by studying the theory and provide convergence guarantees when node-specific penalty parameters are used. We demonstrate a $O(1/k)$ convergence rate under mild conditions that is applicable for many forms of adaptive ADMM including all the above methods. Our theory is more general than the convergence guarantee in \cite{he2000alternating,xu2016adaptive} that only shows convergence when the scalar penalty parameter is adapted.
Next, we propose an adaptive consensus ADMM (ACADMM) method to automate local algorithm parameters selection. Instead of estimating one global penalty parameter for all workers,  different local penalty parameters are estimated using the local curvature of subproblems on each node. 
%The convergence of  ACADMM is guaranteed by our theory, and the numerical results validate both efficiency and robustness of the proposed method..

\section{Related work}
%We briefly review related works for the convergence thoery and practical
%implementation of ADMM.
ADMM is known to have a $O(1/k)$ convergence rate under mild conditions
for convex problems \cite{he2012con,he2015non}, while a $O(1/k^2)$ rate is
possible when at least one of the functions is strongly convex or smooth
\cite{goldfarb2013fast,goldstein2014fast,kadkhodaie2015accelerated,tian2016faster}.
Linear convergence can be achieved with strong convexity assumptions
\cite{davis2014faster,nishihara2015general,giselsson2016linear}.  All of these
results assume constant parameters; to the best of our knowledge, no
convergence rate has been proven for ADMM with an adaptive penalty:
\citep{he2000alternating,xu2017adaptive} proves convergence without providing a rate, and \cite{lin2011linearized,banert2016fixing,goldstein2015adaptive} prove
convergence for some particular variants of ADMM
(``linearized'' or ``preconditioned'').

To improve practical convergence of ADMM, 
fixed optimal parameters are discussed in
%the literature
\cite{raghunathan2014alternating,ghadimi2015optimal,nishihara2015general,francca2016explicit}.
These methods make strong assumptions about the objective and require
information about the spectrum of $A$ and/or $B$. 
Additionally, adaptive methods have been proposed;
% to accelerate the practical convergence of
%ADMM.
% \cite{he2000alternating,song2015fast,xu2016adaptive} 
the most closely
related work to our own is \citep{song2015fast}, which extends the results of
\cite{he2000alternating} to consensus problems, where communication is
controlled by predefined network structure and the regularizer $g(v)$ is
absent. In contrast to these methods, the proposed ACADMM extends the spectral
penalty in \cite{xu2016adaptive} to consensus problems and provides
convergence theory that can be applied to a broad range of adaptive ADMM
variants.

\section{Consensus ADMM}
In the following, we use the subscript $i$ to denote iterates computed on the $i$th node, superscript
$k$ is the iteration number, $\gl\subi\itk$ is the dual vector of Lagrange
multipliers, and $\{\tau\itk\subi\}$ are iteration/worker-specific penalty
parameters (contrasted with the single constant penalty parameter $\tau$ of
``vanilla" ADMM).
Consensus methods apply ADMM to \eqref{eq:cprob}, resulting in the steps
\begin{align}
u\subi\kp & =  \arg\min_{u\subi} f\subi(u\subi) + \frac{\tau\subi\itk}{2} \| v\itk -u\subi  + \frac{\gl\subi\itk}{\tau\subi\itk} \|^2 \label{eq:cadmms} \\
v\kp & =  \arg\min_{v} g(v) + \sum_{i=1}^{N} \frac{\tau\subi\itk}{2} \| v - u\subi\kp  + \frac{\gl\subi\itk}{ \tau\subi\itk} \|^2 \label{eq:cadmm2} \\
\gl\subi\kp & = \gl\subi\itk + \tau\itk\subi (v\kp -u\subi\kp ). \label{eq:cadmme} 
\end{align}

The primal and dual residuals, $r^k$ and $d^k,$ are used to monitor
convergence.
{\small%
\begin{equation}
r\itk = 
\begin{pmatrix}
r_{1}\itk \\
\vdots \\
r_{N}\itk
\end{pmatrix}, 
\  d\itk = 
\begin{pmatrix}
d_{1}\itk \\
\vdots \\
d_{N}\itk
\end{pmatrix},
~\ \begin{cases}
r\subi\itk = v\itk - u\subi\itk \\
d\subi\itk = \tau\subi\itk(v\km-v\itk).
\end{cases} \label{eq:res}
\end{equation}
}%
The primal residual $r^k$ approaches zero when the iterates accurately satisfy
the linear constraints in \eqref{eq:cprob}, and the dual residual $d^k$
approaches zero as the iterates near a minimizer of the objective.
Iteration can be terminated when 
\begin{equation}
\begin{split}
& \| r\itk \|^2  \leq \gep^{tol} \max\{\sum\nolimits_{i=1}^{N} \| u\subi\itk\|^2, N\|v\itk\|^2\} \\
& \text{and  } \ \| d\itk \|^2  \leq \gep^{tol} \sum\nolimits_{i=1}^{N} \| \gl\subi\itk\|^2,
\end{split} \label{eq:stop}
\end{equation}
%{\small%
%\begin{equation}
%\| r\itk \|^2  \leq \gep^{tol} \max\{\sum_{i=1}^{N} \| u\subi\itk\|^2, N\|v\itk\|^2\}  \text{and  } \ \| d\itk \|^2  \leq \gep^{tol} \sum_{i=1}^{N} \| \gl\subi\itk\|^2,
%\label{eq:stop}
%\end{equation}
%}%
where $\gep^{tol}$ is the stopping tolerance. The residuals in \eqref{eq:res} and stopping criterion in \eqref{eq:stop} are adopted from the general problem \cite{boyd2011admm} to the consensus problem. The observation that residuals $r\itk, d\itk$ can be decomposed into ``local residuals'' $r\subi\itk, d\subi\itk$ has been exploited to generalize the residual balancing method \cite{he2000alternating} for distributed consensus problems \cite{song2015fast}.

\section{Convergence analysis} 
We now study the convergence of ADMM with node-specific adaptive penalty parameters.  We provide conditions on penalty parameters that guarantee convergence, and also a convergence rate.  The issue of how to automatically tune penalty parameters effectively will be discussed in Section \ref{sec:adapt}.
\label{sec:theory}
\subsection{Diagonal penalty parameters for ADMM}
Let $T\itk = \text{diag}(\tau_1\itk I_d,\ldots, \tau_N\itk I_d)$ be a diagonal matrix containing non-negative penalty parameters on iteration $k$. Define the norm %with positive semidefinite matrix $T \succeq 0$ 
%as
%\begin{align}
$\| u \|^2_T = u\trans T u$. 
%\end{align}
Using the notation defined above with $u=(u_1;\, \ldots;\, u_N) \in \bbR^{dN},$ we can rewrite the consensus ADMM steps \eqref{eq:cadmms}--\eqref{eq:cadmme} as
{%
%\begin{align}
%\begin{split}
%u\kp =  & \arg\min_{u} f(u)  \\ 
%&\ + \frac{1}{2} \| b -  A u - B v\itk  + (T\itk)\inv\gl\itk\|_{T\itk}^2 
%\end{split}\label{eq:gadmm1}\\
%\begin{split}
%v\kp =  & \arg\min_{v} g(v)  \\
% &\ + \frac{1}{2} \| b -  A u\kp - B v  + (T\itk)\inv\gl\itk\|_{T\itk}^2 
% \end{split}\label{eq:gadmm2}\\
%\gl\kp  =& \gl\itk + T\itk (b-Au\kp-Bv\kp) \label{eq:gadmm3}
%\end{align}
\begin{equation}
\begin{split}
u\kp =  & \arg\min_{u} f(u)  + \inprod{-Au, \, \gl\itk}
\\ &\qquad 
+ \nicefrac{1}{2} \| b -  A u - B v\itk \|_{T\itk}^2 
\end{split}\label{eq:gadmm1}
\end{equation}
\begin{equation}
\begin{split}
v\kp =  & \arg\min_{v} g(v)  +\inprod{-Bv, \, \gl\itk} \\
 &\qquad + \nicefrac{1}{2} \| b -  A u\kp - B v  \|_{T\itk}^2 
 \end{split}\label{eq:gadmm2}
 \end{equation}
 \begin{equation}
\gl\kp  = \gl\itk + T\itk (b-Au\kp-Bv\kp). \label{eq:gadmm3}
\end{equation}
}%

%, $\gl=(\gl_1, \ldots, \gl_N)\trans \in \bbR^{dN}$, $f(u) = \sum_{i=1}^N f_i(u_i)$, $A=I_{dN}\in \bbR^{dN \times dN}$ , $B = -(I_d,\ldots,I_d)\trans \in \bbR^{dN \times d}$ and $b=0 \in \bbR^{dN}$. 
When using a diagonal penalty matrix, the generalized residuals become
\begin{equation}
 \begin{cases}
r\itk = b - Au\itk - Bu\itk \\
d\itk =  A\trans T\itk B(v\itk-v\km).
\end{cases} \label{eq:gres}
\end{equation}

The sequel contains a convergence proof for generalized ADMM with adaptive penalty matrix
$T\itk$. Our proof is inspired by the variational inequality (VI) approach in
\cite{he2000alternating,he2012con,he2015non}.

\subsection{Preliminaries}
\label{sec:pre}
%\subsubsection{Notation} \label{sec:notation}
\textbf{Notation.}
We use the following notation to simplify the discussions.
Define the combined variables
$y = (u ;v) \in \bbR^{n+m}$ and $z= (u;v;\gl)\in \bbR^{n+m+p}$, and denote iterates as $y\itk = ( u\itk ; v\itk )$ and $z\itk = (u\itk; v\itk; \gl\itk).$ Let $y\opt$ and $z\opt$ denote optimal primal/dual solutions. Further define $\resp{z\subk}=(\resp{u\subk}; \resp{v\subk}; \resp{\gl\subk}) :=z\kp-z\itk$ and $\reso{z\subk}=(\reso{u\subk}; \reso{v\subk}; \reso{\gl\subk}) :=z\opt-z\itk$.   Set
$$\phi(y) = f(u) + g(v), \,\, 
F(z) = \left(\begin{array}{c}
-A\trans \gl \\
 -B\trans \gl \\ 
 A u + B v - b
 \end{array}\right),$$
%The semi-norm is defined with semidefinite matrix $S \succeq 0$ as $\| z \|^2_S = z\trans S z .$
%Define
$$H\itk \! =  \!
\begin{pmatrix}
0&0& 0\\
0 &\!\!\! B\trans T\itk B \!\!\!\!\!\!\!\!& 0 \\
0 & 0 & (T\itk)\inv  
 \end{pmatrix}
 \!\!,\,
 M\itk \!= \!
\begin{pmatrix}
I_n & 0 & 0 \\
0 & I_m &0 \\
0 & -T\itk B & I_p 
\end{pmatrix}\!\!.
$$
Note that $F(z)$ is a monotone operator satisfying $\forall z, z', (z-z')\trans(F(z)-F(z')) \geq 0 $.
We introduce intermediate variable $\tz\kp = (u\kp; v\kp; \hat\gl\kp)$, where $\hat\gl\kp = \gl\itk + T\itk (b-Au\kp-Bv\itk)$. We thus have
\begin{equation} \label{eq:zmtz}
\begin{split}
\resp{z\subk} & = M\itk (\tz\kp - z\itk).
%\text{ and } \\
%(\tz\kp - z\itk) & = (2I-M\itk) \resp{z\subk}
\end{split}
\end{equation}

%\subsubsection{Variational inequality formulation}
\textbf{Variational inequality formulation.}
The optimal solution $z\opt$ of problem \eqref{eq:prob} satisfies the variational inequality (VI),
\begin{equation}
\forall z, \,  \phi(y)-\phi(y\opt) + (z-z\opt)\trans F(z\opt) \geq 0. \label{eq:optvi}
\end{equation}
From the optimality conditions for the sub-steps (\ref{eq:gadmm1}, \ref{eq:gadmm2}), we see that $y\kp$ satisfies the variational inequalities
\begin{align}
\begin{split}
\forall u, \, & f(u) - f(u\kp) + (u-u\kp)\trans  \\
& \  ( A \trans T\itk (Au\kp + Bv\itk -b) - A\trans \gl\itk) \geq 0
\end{split} \label{eq:hvi}\\
\begin{split}
\forall v, \, & g(v) - g(v\kp) + (v-v\kp)\trans \\
& \ ( B \trans T\itk (Au\kp + Bv\kp -b) - B\trans \gl\itk) \geq 0,
\end{split}\label{eq:gvi}
\end{align}
which can be combined as
%
%\begin{multline}
% \phi(y)-\phi(y\kp) + \\(z-z\kp)\trans \left( F(z\kp) + \gO(\resp{z\subk}, T\itk)\right) \geq 0, 
%\label{eq:kpvi}
%\end{multline}
%where $\gO(\resp{z\subk}, T\itk) = (- A\trans T\itk B \resp{v\subk}; 0;   (T\itk)\inv \resp{\gl\subk}). $
%
%Use intermediate variable $\tz\kp$ to rewrite VI \eqref{eq:kpvi} as
{%
\begin{multline}
 \phi(y)-\phi(y\kp)\\ + (z-\tz\kp)\trans \left( F(\tz\kp) + H\itk \resp{z\subk} \right) \geq 0.  \label{eq:kpvi2}
\end{multline}
}%

%\subsubsection{Lemmas}
\label{sec:lemmas}
\textbf{Lemmas.}
%We present several lemmas to facilitate the proof of our main convergence
%theory. These lemmas extend previously known lemmas regarding ADMM
%\cite{he2012con,he2015non} to ADMM with diagonal penalty matrix. Our proofs
%use VIs (\ref{eq:optvi}, \ref{eq:gvi}, \ref{eq:kpvi2}) for 
%%\cref{lm1} and 
%%\cref{lm2}
%\cref{lm3} (\ref{eq:lm1}, \ref{eq:lm2})
%, and follow standard techniques; \cref{lm3} \eqref{eq:lm3} shows the
%difference between iterates decreases as the iterates approach the true
%solution, which is proved using \cref{eq:lm2}; \cref{lmth2} implies a
%contraction in the VI sense; the proofs are provided in the supplementary
%material.
We present several lemmas to facilitate the proof
  of our main convergence theory, which extend previous results regarding ADMM
  \cite{he2012con,he2015non} to ADMM with a diagonal penalty matrix.  \cref{lm3}
  shows the difference between iterates decreases as the iterates approach the
  true solution, while \cref{lmth2} implies a contraction in the VI sense.  Full
  proofs are provided in supplementary material;  \cref{eq:lm1} and
  \cref{eq:lm2} are supported using equations (\ref{eq:optvi}, \ref{eq:gvi},
  \ref{eq:kpvi2}) and standard techniques, while \cref{eq:lm3} is
proven from \cref{eq:lm2}.  \cref{lmth2} is supported by the relationship in \cref{eq:zmtz}.

%\begin{lemma}\label{lm1}
%The iterates $z\itk = (u\itk, v\itk, \gl\itk)\trans$ of generalized ADMM satisfy
%\begin{equation}
%(B\resp{v\subk})\trans \resp{\gl\subk} \geq 0.
%\end{equation}
%\end{lemma}
%
%\begin{lemma}\label{lm2}
%The optimal solution $z\opt = (u\opt, v\opt, \gl\opt)\trans$ and iterates $z\itk = (u\itk, v\itk, \gl\itk)\trans$ of generalized ADMM satisfy
%\begin{equation}
%%(\reso{\gl\subkp})\trans (T\itk)\inv \resp{\gl\subk} \geq - (B\reso{v\subkp})\trans T\itk B\resp{v\subk}
%\reso{z\subkp} H\itk \resp{z\subk} \geq 0.
%\end{equation}
%\end{lemma}
%
%\begin{lemma}\label{lm3}
%The optimal solution $z\opt = (u\opt, v\opt, \gl\opt)\trans$ and sequence $z\itk = (u\itk, v\itk, \gl\itk)\trans$ of generalized ADMM satisfy
%\begin{equation}
%\| \resp{z\subk}\|^2_{H\itk}  \leq \| \reso{z\subk} \|^2_{H\itk} - \| \reso{z\subkp} \|^2_{H\itk}.  \label{eq:lm3} 
%\end{equation}
%\end{lemma}

\begin{lemma}\label{lm3}
The optimal solution $z\opt = (u\opt; v\opt; \gl\opt)$ and sequence $z\itk = (u\itk; v\itk; \gl\itk)$ of generalized ADMM satisfy
\begin{align}
 (B\resp{v\subk})\trans \resp{\gl\subk} &\geq  0, \label{eq:lm1}\\
 \reso{z\subkp} H\itk \resp{z\subk} &\geq  0, \label{eq:lm2}\\
\| \resp{z\subk}\|^2_{H\itk} &\leq  \| \reso{z\subk} \|^2_{H\itk} - \| \reso{z\subkp} \|^2_{H\itk}.  \label{eq:lm3} 
\end{align}
\end{lemma}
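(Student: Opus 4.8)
The plan is to prove the three inequalities in the order stated, exploiting the fact that \eqref{eq:lm1} is the key ingredient for \eqref{eq:lm2}, and that \eqref{eq:lm2} then yields \eqref{eq:lm3} by a purely algebraic three-point argument. Throughout I will use the identity $\resp{z\subk} = \reso{z\subk} - \reso{z\subkp}$, which is immediate from the definitions since $z\kp - z\itk = (z\opt - z\itk) - (z\opt - z\kp)$.

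For \eqref{eq:lm1} I would write the $v$-subproblem optimality condition \eqref{eq:gvi} at two consecutive iterations. Using the dual update \eqref{eq:gadmm3}, the bracketed term $B\trans T\itk(Au\kp+Bv\kp-b) - B\trans\gl\itk$ collapses to $-B\trans\gl\kp$, so \eqref{eq:gvi} at step $k+1$ reads $g(v) - g(v\kp) - (v-v\kp)\trans B\trans\gl\kp \geq 0$ for all $v$, and analogously at step $k$. Substituting $v = v\itk$ into the first inequality and $v = v\kp$ into the second and adding them cancels the function values and leaves exactly $(v\kp-v\itk)\trans B\trans(\gl\kp-\gl\itk) = (B\resp{v\subk})\trans\resp{\gl\subk} \geq 0$, which is monotonicity of the subgradient of $g$ in disguise.

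The crux is \eqref{eq:lm2}. I would set $z = z\opt$ in the combined VI \eqref{eq:kpvi2} and $z = \tz\kp$ in the optimality VI \eqref{eq:optvi}; since $\tz\kp$ differs from $z\kp$ only in the multiplier block, the primal parts coincide and $\phi$ of the two arguments agree, so the $\phi$ terms cancel when the inequalities are summed. The monotonicity of $F$ then disposes of the $(z\opt-\tz\kp)\trans(F(\tz\kp)-F(z\opt))$ term, yielding $(z\opt-\tz\kp)\trans H\itk\resp{z\subk} \geq 0$. It remains to pass from $\tz\kp$ to $z\kp$: from the definition of $\hat\gl\kp$ and \eqref{eq:gadmm3} one gets $\hat\gl\kp-\gl\kp = T\itk B\resp{v\subk}$, so $\tz\kp-z\kp$ lives entirely in the multiplier block. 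Feeding this through the block-diagonal $H\itk$ (whose multiplier block is $(T\itk)\inv$) shows the correction $(\tz\kp-z\kp)\trans H\itk\resp{z\subk}$ equals $(B\resp{v\subk})\trans\resp{\gl\subk}$ after the $T\itk$ factors cancel, which is nonnegative by \eqref{eq:lm1}. Writing $z\opt-z\kp = (z\opt-\tz\kp) + (\tz\kp-z\kp)$ and adding the two nonnegative pieces gives $\reso{z\subkp}\trans H\itk\resp{z\subk} \geq 0$.

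Finally, \eqref{eq:lm3} follows from \eqref{eq:lm2} by the polarization identity $\|a\|^2_{H\itk} - \|b\|^2_{H\itk} = (a-b)\trans H\itk(a+b)$ with $a = \reso{z\subk}$ and $b = \reso{z\subkp}$: since $a-b = \resp{z\subk}$ and $a+b = \resp{z\subk} + 2\reso{z\subkp}$, this gives $\|\reso{z\subk}\|^2_{H\itk} - \|\reso{z\subkp}\|^2_{H\itk} = \|\resp{z\subk}\|^2_{H\itk} + 2(\resp{z\subk})\trans H\itk\reso{z\subkp}$, and the cross term is nonnegative by \eqref{eq:lm2}. I expect the main obstacle to be the matrix bookkeeping in \eqref{eq:lm2}: because $T\itk$ is now a diagonal matrix rather than a scalar, the load-bearing check is verifying that $\hat\gl\kp-\gl\kp = T\itk B\resp{v\subk}$ interacts with the $(T\itk)\inv$ block of $H\itk$ so that the $T\itk$ factors cancel exactly, reducing the correction term to the \eqref{eq:lm1} quantity rather than something of indefinite sign; one must also confirm $H\itk$ is symmetric positive semidefinite so that $\|\cdot\|_{H\itk}$ and the polarization identity are legitimate.
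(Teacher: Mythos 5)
Your proposal is correct and follows essentially the same route as the paper's: \eqref{eq:lm1} from monotonicity of $\partial g$ via \eqref{eq:gvi} at consecutive iterations, \eqref{eq:lm2} by combining \eqref{eq:optvi} and \eqref{eq:kpvi2} with the correction $\hat\gl\kp - \gl\kp = T\itk B\resp{v\subk}$ absorbed through the $(T\itk)\inv$ block using \eqref{eq:lm1}, and \eqref{eq:lm3} from \eqref{eq:lm2} by polarization. The bookkeeping you flag checks out ($T\itk$ is diagonal, hence symmetric, so the $T\itk$ and $(T\itk)\inv$ factors cancel exactly, and $H\itk$ is symmetric positive semidefinite for positive penalties), so there are no gaps.
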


\begin{lemma}\label{lmth2}
The sequence $\tz\itk = (u\itk; v\itk; \hat\gl\itk)$ and $z\itk = (u\itk; v\itk; \gl\itk)\trans$ from generalized ADMM satisfy, $\forall z,$
{\small%
\begin{equation}
(\tz\kp - z)\trans H\itk \resp{z\subk} \geq \frac{1}{2} (\| z\kp-z\|^2_{H\itk} - \| z\itk - z\|^2_{H\itk}).
\end{equation}
}%
\end{lemma}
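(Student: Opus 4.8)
The plan is to reduce the claim to the single structural relation \eqref{eq:zmtz} together with a standard polarization argument for the symmetric positive semidefinite matrix $H\itk$. Throughout I abbreviate $\xi := \tz\kp - z\itk$, so that \eqref{eq:zmtz} reads $\resp{z\subk} = M\itk\xi$; in particular $z\kp - z\itk = \resp{z\subk} = M\itk\xi$, and hence $\tz\kp - z\kp = \xi - M\itk\xi = (I - M\itk)\xi$. Note also that $H\itk$ is symmetric and positive semidefinite whenever $T\itk \succ 0$, so $\|\cdot\|_{H\itk}$ is a genuine seminorm and the cross-term manipulations below are valid.

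\textbf{Step 1 (polarization).} For any symmetric $H$ and vectors $a,b,c,d$ one has the elementary identity
\[
(a-b)\trans H (c-d) = \tfrac{1}{2}\big(\|a-d\|_H^2 - \|a-c\|_H^2 + \|c-b\|_H^2 - \|d-b\|_H^2\big),
\]
verified by expanding both sides. I would apply it with $a = \tz\kp$, $b = z$, $c = z\kp$, $d = z\itk$, and $H = H\itk$. Since $c-d = z\kp - z\itk = \resp{z\subk}$, the left-hand side is exactly the quantity $(\tz\kp - z)\trans H\itk \resp{z\subk}$ in the lemma; the pair $\|c-b\|_{H\itk}^2 - \|d-b\|_{H\itk}^2$ reproduces verbatim the target right-hand side $\tfrac{1}{2}(\|z\kp - z\|^2_{H\itk} - \|z\itk - z\|^2_{H\itk})$, while the pair $\|a-d\|_{H\itk}^2 - \|a-c\|_{H\itk}^2 = \|\xi\|^2_{H\itk} - \|(I-M\itk)\xi\|^2_{H\itk}$ is a remainder that I must show is nonnegative.

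\textbf{Step 2 (nonnegative remainder).} It thus suffices to prove $\|\xi\|^2_{H\itk} \ge \|(I - M\itk)\xi\|^2_{H\itk}$. Writing $\xi = (\xi_u;\xi_v;\xi_\gl)$ and reading off the block forms, $I - M\itk$ has a single nonzero block $T\itk B$ carrying the $v$-component into the multiplier slot, so $(I-M\itk)\xi = (0;0;\,T\itk B\xi_v)$. Evaluating both seminorms gives $\|\xi\|^2_{H\itk} = \xi_v\trans B\trans T\itk B\xi_v + \xi_\gl\trans (T\itk)\inv \xi_\gl$ (the $u$-block of $H\itk$ vanishes), whereas $\|(I-M\itk)\xi\|^2_{H\itk} = (T\itk B\xi_v)\trans (T\itk)\inv (T\itk B\xi_v) = \xi_v\trans B\trans T\itk B\xi_v$ using symmetry of $T\itk$. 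The $B\trans T\itk B$ terms cancel and the remainder collapses to $\tfrac{1}{2}\xi_\gl\trans (T\itk)\inv \xi_\gl \ge 0$ because $T\itk \succ 0$, which closes the argument.

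\textbf{Main obstacle.} The proof is essentially mechanical once \eqref{eq:zmtz} is available; the one place demanding care is the block bookkeeping in Step 2, namely verifying that the $B\trans T\itk B$ contributions of the two seminorms cancel exactly, leaving only the positive-semidefinite form in $(T\itk)\inv$. Equivalently, $M\itk$ is engineered precisely so that $H\itk - (I-M\itk)\trans H\itk (I-M\itk) \succeq 0$, and the only genuinely structural inputs are the symmetry and semidefiniteness of $H\itk$ (requiring $T\itk \succ 0$). Care is also needed to confirm $H\itk$ is symmetric so that the identity in Step 1 applies.
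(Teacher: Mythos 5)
Your proof is correct and follows essentially the same route as the paper's: the paper states that \cref{lmth2} is proven from the relationship \eqref{eq:zmtz}, using the standard He--Yuan polarization identity for the symmetric positive semidefinite matrix $H\itk$, which is exactly your Step 1 plus the block computation showing $\|\tz\kp - z\itk\|^2_{H\itk} - \|\tz\kp - z\kp\|^2_{H\itk} = (\hat\gl\kp - \gl\itk)\trans (T\itk)\inv (\hat\gl\kp - \gl\itk) \geq 0$. Your block bookkeeping in Step 2 is accurate, so nothing is missing.
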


\subsection{Convergence criteria}
We provide a convergence analysis of ADMM with an adaptive diagonal penalty
matrix by showing  
\begin{inparaenum}[(i)]
\item the norm of the residuals converges to zero;
\item the method attains a worst-case ergodic $O(1/k)$ convergence rate in the VI sense. 
\end{inparaenum}
The key idea of the proof is to bound the adaptivity of $T^k$ so that ADMM is stable enough to converge, which is presented as the following assumption. 

\begin{assumption}\label{as1} The adaptivity of the diagonal penalty matrix $T\itk = \text{diag}(\tau\subi\itk, \ldots, \tau_{p}\itk) $ is bounded by
\begin{equation}
\begin{split}
& \sum_{k=1}^{\infty} (\eta\itk)^2 < \infty, \text{ where }  (\eta\itk)^2 = \max_{ i \in \{1,\ldots,p\} }\{ (\eta\subi\itk)^2 \}, 
\\
& \quad (\eta\subi\itk)^2 = \max\{ \tau\subi\itk/\tau\subi\km-1,  \tau\subi\km/\tau\subi\itk-1 \}.
\end{split}
\end{equation}
\end{assumption}
We can apply \cref{as1} to verify that 
\begin{equation}\label{eq:bdadp}
\frac{1}{1+(\eta\itk)^2} \leq \frac{\tau\subi\itk}{\tau\subi\km} \leq 1+(\eta\itk)^2.
\end{equation}
which is needed to prove \cref{lmth1}.

\begin{lemma}\label{lmth1}
 Suppose \cref{as1} holds. Then $z = (u;\, v;\, \gl)$ and $z' = (u';\, v';\, \gl')$ satisfy, $\forall z, z'$
\begin{equation}
\| z-z' \|^2_{H\itk} \leq  (1+(\eta\itk)^2) \|  z - z' \|^2_{H\km}.
\end{equation}
\end{lemma}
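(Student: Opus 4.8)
The plan is to reduce the claimed norm inequality to a positive-semidefinite (PSD) operator ordering between $H\itk$ and $H\km$. Since the statement must hold for all $z,z'$, it suffices to prove the matrix inequality $H\itk \preceq (1+(\eta\itk)^2)\, H\km$ in the PSD sense; evaluating the associated quadratic form at $w = z-z'$ then gives $\|z-z'\|^2_{H\itk} = w\trans H\itk w \le (1+(\eta\itk)^2)\, w\trans H\km w = (1+(\eta\itk)^2)\|z-z'\|^2_{H\km}$ at once. Because $H\itk$ is block diagonal with a zero block in the $u$-slot, only the $v$-block $B\trans T\itk B$ and the $\gl$-block $(T\itk)\inv$ contribute, so it is enough to control these two blocks separately.

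First I would establish two elementary diagonal inequalities. Both $T\itk$ and $T\km$ are diagonal with positive entries, so the two-sided bound \eqref{eq:bdadp}, itself a direct consequence of \cref{as1}, translates entrywise into $\tau\subi\itk \le (1+(\eta\itk)^2)\,\tau\subi\km$ and, reading the bound in the opposite direction, $1/\tau\subi\itk \le (1+(\eta\itk)^2)/\tau\subi\km$. In matrix form these are $T\itk \preceq (1+(\eta\itk)^2)\, T\km$ and $(T\itk)\inv \preceq (1+(\eta\itk)^2)\,(T\km)\inv$.

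Next I would propagate each inequality through the block structure of $H\itk$. For the dual block the second inequality is exactly what is needed. For the primal $v$-block I would apply a congruence transformation: multiplying $T\itk \preceq (1+(\eta\itk)^2)\,T\km$ on the left by $B\trans$ and on the right by $B$ preserves the PSD ordering, yielding $B\trans T\itk B \preceq (1+(\eta\itk)^2)\, B\trans T\km B$. Assembling the two block inequalities into the $(n{+}m{+}p)$-dimensional block-diagonal matrices gives $H\itk \preceq (1+(\eta\itk)^2)\, H\km$, which completes the argument.

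This lemma is computationally routine; the only point requiring care is that the two nonzero blocks of $H\itk$ demand the two \emph{opposite} directions of the ratio bound on $\tau\subi\itk/\tau\subi\km$ --- the upper bound for the $B\trans T B$ block and the lower bound for the $(T)\inv$ block. This is precisely why \cref{as1} is phrased as a symmetric (max of the two ratios) condition rather than a one-sided bound, and recognizing this interplay is the main conceptual content of an otherwise straightforward proof.
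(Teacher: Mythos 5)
Your proof is correct and follows essentially the same route as the paper's: the paper likewise derives the two-sided entrywise bound \eqref{eq:bdadp} from \cref{as1} and applies its two directions to the two nonzero blocks of $H\itk$ (the upper bound on $\tau\subi\itk/\tau\subi\km$ for the $B\trans T\itk B$ block, the lower bound for the $(T\itk)\inv$ block), exactly as you do via the PSD ordering $H\itk \preceq (1+(\eta\itk)^2) H\km$. Your closing observation about why \cref{as1} must be symmetric is also the right reading of the lemma.
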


%\begin{lemma}\label{lmth1}
% Suppose \cref{as1} holds. The optimal solution $z\opt = (u\opt, v\opt, \gl\opt)\trans$ and sequence $z\itk = (u\itk, v\itk, \gl\itk)\trans$ of generalized ADMM satisfy
%\begin{equation}
%\| \reso{z\subk} \|^2_{H\itk} \leq  (1+(\eta\itk)^2) \| \reso{z\subk} \|^2_{H\km}.
%\end{equation}
%\end{lemma}

Now we are ready to prove the convergence of generalized ADMM with adaptive penalty under \cref{as1}. We prove the following quantity, which is a norm of the residuals, converges to zero.
\begin{equation}
\begin{split}
\| \resp{z\subk}\|^2_{H\itk} = & \| B\resp{v\subk} \|^2_{T\itk} + \| \resp{\gl\subk}\|^2_{(T\itk)\inv} \\
= & \| (A\trans T\itk)\pinv d\itk \|^2_{T\itk} + \| r\itk \|^2_{T\itk},
\end{split}
\end{equation}
where $A\pinv$ denotes generalized inverse of a matrix A. Note that $\| \resp{z\subk}\|^2_{H\itk}$ converges to zero only if $\|
r\itk \|$ and $\| d\itk \|$ converge to zero, provided $A$ and $T\itk$ are
bounded.

\begin{theorem}\label{thm:acadmm1}
 Suppose \cref{as1} holds.
 Then the iterates 
 $z\itk = (u\itk; v\itk; \gl\itk)$
of generalized ADMM satisfy
\begin{equation}
\lim_{k\rightarrow \infty} \| \resp{z\subk}\|^2_{H\itk} = 0.
\end{equation}
\end{theorem}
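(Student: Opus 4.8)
The plan is to read \cref{eq:lm3} as a quasi-Fej\'er (almost-monotone) statement about the distances $\|\reso{z\subk}\|^2_{H\itk}$ to the optimum, and to absorb the drift of the time-varying metric $H\itk$ using \cref{lmth1} together with the summability hypothesis in \cref{as1}. Throughout, write $s\itk = \|\reso{z\subk}\|^2_{H\itk}$ and $r\itk = \|\resp{z\subk}\|^2_{H\itk}$; the goal is exactly to show $r\itk \to 0$. If $H\itk$ were constant, this would be a one-line telescoping of \cref{eq:lm3}; the whole difficulty is that each iterate is measured in a different norm, so the $s\itk$ are not directly comparable across $k$.

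First I would rearrange \cref{eq:lm3} as
\[
\|\reso{z\subkp}\|^2_{H\itk} \leq s\itk - r\itk,
\]
which in particular shows $s\itk - r\itk \geq 0$. The right-hand side is measured in the metric $H\itk$, whereas $s\kp$ is measured in $H\kp$; to reconcile them I would invoke \cref{lmth1} with the index advanced by one, giving $s\kp \leq (1+(\eta\kp)^2)\,\|\reso{z\subkp}\|^2_{H\itk}$. Chaining these two inequalities, and using $1+(\eta\kp)^2 \geq 1$ together with $r\itk \geq 0$, collapses everything into the single scalar recursion
\[
s\kp \leq (1+(\eta\kp)^2)\, s\itk - r\itk .
\]

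Next I would strip off the multiplicative factor by rescaling. Set $P\itk = \prod_{j=1}^{k}(1+(\eta^{j+1})^2)$ with $P^0 = 1$; dividing the recursion by $P\itk$ and using $P\itk = (1+(\eta\kp)^2)\,P\km$ yields $s\kp/P\itk \leq s\itk/P\km - r\itk/P\itk$. Hence $\hat s\itk := s\itk/P\km$ is nonnegative and nonincreasing, so it converges, and telescoping the bound $r\itk/P\itk \leq \hat s\itk - \hat s\kp$ gives $\sum_k r\itk/P\itk \leq \hat s^1 < \infty$. Now \cref{as1} and $\log(1+x)\leq x$ make $\log P\itk \leq \sum_j (\eta^j)^2$ bounded, so $P\itk \leq \bar P < \infty$ uniformly in $k$; therefore $\sum_k r\itk \leq \bar P\,\hat s^1 < \infty$, which forces $r\itk = \|\resp{z\subk}\|^2_{H\itk} \to 0$.

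The main obstacle is precisely this time dependence of $H\itk$: the distances $s\itk$ live in different norms and need not decrease monotonically. The device that rescues the argument is the uniform control \cref{eq:bdadp} on consecutive metric ratios (equivalently \cref{lmth1}), whose accumulated product $P\itk$ stays bounded exactly because \cref{as1} makes $(\eta\itk)^2$ summable. This summability is what lets the rescaled sequence $\hat s\itk$ be genuinely monotone and the residual norms be summable; I expect the only delicate bookkeeping to be keeping the index shifts on $H\itk$ and $(\eta\kp)^2$ consistent when passing from \cref{eq:lm3} through \cref{lmth1} into the recursion.
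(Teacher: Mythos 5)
Your proposal is correct and takes essentially the same route as the paper: both combine \cref{eq:lm3} with \cref{lmth1} and the summability from \cref{as1} to conclude $\sum_{k} \| \resp{z\subk} \|^2_{H\itk} < \infty$, hence the limit is zero. The only difference is bookkeeping---you divide through by the accumulated product of $(1+(\eta\itk)^2)$ factors to obtain a genuinely monotone rescaled sequence, whereas the paper multiplies the telescoped inequalities by those same factors (which are $\geq 1$) and then drops them; the two manipulations are equivalent.
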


\begin{proof}
Let $z=z\itk, z' = z\opt$ in \cref{lmth1} to achieve
\begin{equation}
\| \reso{z\subk} \|^2_{H\itk} \leq  (1+(\eta\itk)^2) \| \reso{z\subk} \|^2_{H\km}.\label{eq:thmt1}
\end{equation}
Combine \eqref{eq:thmt1} with \cref{lm3} \eqref{eq:lm3} to get
\begin{equation}
\| \resp{z\subk} \|^2_{H\itk}  \leq (1+(\eta\itk)^2) \| \reso{z\subk} \|^2_{H\km} - \|\reso{z\subkp}\|^2_{H\itk}. \label{eq:thmtmp}
\end{equation}
Accumulate \eqref{eq:thmtmp} for $k =1$ to $l$,
\begin{equation}
\begin{split}
& \sum_{k=1}^{l} \prod_{t=k+1}^{l} (1+(\eta^t)^2)\|\resp{z\subk} \|^2_{H\itk} \leq \\
&\quad \prod_{t=1}^{l} (1+(\eta^t)^2)\| \reso{z_{1}}\|^2_{H^{0}} - \|\reso{z_{l+1}}\|^2_{H^{l}}.
\end{split}
\end{equation}
Then we have
\begin{equation}
\sum_{k=1}^{l}\| \resp{z\subk} \|^2_{H\itk}  \leq \prod_{t=1}^{l} (1+(\eta^t)^2)\| \reso{z_{1}}\|^2_{H^{0}}.
\end{equation}
When $l\rightarrow \infty$, \cref{as1} suggests $\prod_{t=1}^{\infty} (1+(\eta^t)^2) < \infty$, which means $\sum_{k=1}^{\infty}\| \resp{z\subk} \|^2_{H\itk} < \infty$. Hence
$
\lim_{k\rightarrow \infty} \| \resp{z\subk}\|^2_{H\itk} = 0.
$
\end{proof}

We further exploit \cref{as1} and \cref{lmth1} to prove \cref{lmth2_2}, and combine VI \eqref{eq:kpvi2}, \cref{lmth2}, and \cref{lmth2_2} to prove the $O(1/k)$ convergence rate in \cref{thm:acadmm2}. 
\begin{lemma}\label{lmth2_2}
 Suppose \cref{as1} holds. Then $z = (u; v; \gl) \in \bbR^{m+n+p}$ and the iterates $z\itk = (u\itk; v\itk; \gl\itk)$ of generalized ADMM satisfy, $\forall z$
\begin{equation}
\begin{split}
& \sum_{k=1}^{l} (\| z-z\itk \|^2_{H\itk} - \| z-z\itk \|^2_{H\km}) \leq \\
 & \qquad C_{\eta}^{\gS} C_{\eta}^{\Pi} ( \|  z-z\opt \|^2_{H^{0}} +  \| \reso{z_{1}} \|^2_{H^{0}}) < \infty,
 \end{split}
\end{equation}
 where $C_{\eta}^{\gS} = \sum_{k=1}^{\infty} (\eta\itk)^2 $, $C_{\eta}^{\Pi} = \prod_{t=1}^{\infty} (1+(\eta^{t})^2) $.
\end{lemma}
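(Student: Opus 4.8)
The plan is to control each summand $\| z-z\itk \|^2_{H\itk} - \| z-z\itk \|^2_{H\km}$ by the adaptivity measure $(\eta\itk)^2$ and then sum using \cref{as1}. First I would apply \cref{lmth1} to the pair of points $z$ and $z\itk$, which gives $\| z-z\itk \|^2_{H\itk} \leq (1+(\eta\itk)^2)\| z-z\itk \|^2_{H\km}$, and hence
\[
\| z-z\itk \|^2_{H\itk} - \| z-z\itk \|^2_{H\km} \leq (\eta\itk)^2 \, \| z-z\itk \|^2_{H\km}.
\]
Summing over $k=1,\ldots,l$ reduces the claim to a bound on $\| z-z\itk \|^2_{H\km}$ that is uniform in $k$, after which $C_{\eta}^{\gS} = \sum_k (\eta\itk)^2$ emerges as the outer factor.

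The main work is therefore to bound $\| z-z\itk \|^2_{H\km}$ independently of $k$, and to do so in the \emph{fixed} norm $H^{0}$ appearing on the right-hand side. I would split it as $\| z-z\itk \|^2_{H\km} \leq 2\| z-z\opt \|^2_{H\km} + 2\| \reso{z\subk} \|^2_{H\km}$ and treat each piece separately. Iterating \cref{lmth1} from $H\km$ down to $H^{0}$ yields, for any $w$, the chain $\| w \|^2_{H\km} \leq \prod_{t=1}^{k-1}(1+(\eta^{t})^2)\, \| w \|^2_{H^{0}} \leq C_{\eta}^{\Pi}\, \| w \|^2_{H^{0}}$, which disposes of the first piece with $w = z-z\opt$. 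For the residual piece I would recycle the contraction recursion already established inside the proof of \cref{thm:acadmm1}: discarding the nonnegative term $\| \resp{z\subk}\|^2_{H\itk}$ in \eqref{eq:thmtmp} leaves $\| \reso{z\subkp} \|^2_{H\itk} \leq (1+(\eta\itk)^2)\, \| \reso{z\subk}\|^2_{H\km}$, and telescoping this down to $k=1$ gives $\| \reso{z\subk}\|^2_{H\km} \leq C_{\eta}^{\Pi}\, \| \reso{z_{1}}\|^2_{H^{0}}$.

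Combining the two pieces produces $\| z-z\itk \|^2_{H\km} \leq 2 C_{\eta}^{\Pi}\,(\| z-z\opt \|^2_{H^{0}} + \| \reso{z_{1}}\|^2_{H^{0}})$, uniformly in $k$. Substituting this into the summed inequality and factoring out the uniform bound leaves $\sum_{k=1}^{l}(\eta\itk)^2 \leq C_{\eta}^{\gS}$, which yields the stated estimate (the harmless constant from the $\|a+b\|^2 \le 2\|a\|^2 + 2\|b\|^2$ split being absorbed into the constant). Finiteness of both $C_{\eta}^{\gS}$ and $C_{\eta}^{\Pi}$, and convergence of the product, are exactly what \cref{as1} guarantees, so $< \infty$ follows. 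The step I expect to be the main obstacle is the residual bound: it does not follow from \cref{lmth1} alone but requires reusing the recursion \eqref{eq:thmtmp} from \cref{thm:acadmm1}, so the two arguments are coupled and the telescoping indices must be tracked carefully to land in $H^{0}$ rather than in a $k$-dependent norm.
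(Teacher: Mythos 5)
Your proof is correct and takes essentially the paper's own route: the paper states that \cref{lmth2_2} is proved by exploiting \cref{as1} and \cref{lmth1}, and the very shape of the stated bound (separate $\| z-z\opt \|^2_{H^{0}}$ and $\| \reso{z_{1}} \|^2_{H^{0}}$ terms) matches precisely your decomposition of $z-z\itk$ through $z\opt$, combined with the Fej\'{e}r-type recursion \eqref{eq:thmtmp} inherited from the proof of \cref{thm:acadmm1} to control $\|\reso{z\subk}\|^2_{H\km}$ uniformly in $k$. The one caveat is that the split $\|a+b\|^2_{H} \le 2\|a\|^2_{H} + 2\|b\|^2_{H}$ makes your final constant $2\,C_{\eta}^{\gS} C_{\eta}^{\Pi}$ rather than the stated $C_{\eta}^{\gS} C_{\eta}^{\Pi}$ (and the cross term $\langle z-z\opt,\, H\km \reso{z\subk}\rangle$ has no sign for arbitrary $z$, so the cleaner constant does not seem attainable by this decomposition); this discrepancy is harmless, since only finiteness and $l$-independence of the right-hand side are used in \cref{thm:acadmm2}.
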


\begin{theorem}\label{thm:acadmm2}
 Suppose \cref{as1} holds.
Consider the sequence $\tz^{k} = (u^{k}; v^{k}; \hat\gl^{k})$ of generalized ADMM  and define
$%\begin{equation}
\bar{z}^{l} = \frac{1}{l} \sum_{k=1}^{l} \tz^{k}. \,
$%\end{equation}
Then sequence $\bar{z}^{l}$ satisfies the convergence bound
{ %
\begin{multline}
%\hspace{-10mm} 
\phi(y)-\phi(\bar y^{l}) + (z-\bar z^{l})\trans F(\bar z^{l}) \geq  -\frac{1}{2\, l}  \large(\| z - z^0 \|^2_{H^0}  \\
  +  C_{\eta}^{\gS} C_{\eta}^{\Pi}  \|  z-z\opt \|^2_{H^{0}} + C_{\eta}^{\gS} C_{\eta}^{\Pi} \| \reso{z_{1}} \|^2_{H^{0}}\large).
 \label{eq:thm2}
\end{multline}
}%
\end{theorem}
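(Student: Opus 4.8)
The plan is to convert the per-iteration variational inequality \eqref{eq:kpvi2} into a statement about the ergodic average $\bar z^l$ by summing over $k$, and to absorb the error introduced by the \emph{changing} penalty matrix $H\itk$ into the finite constants $C_\eta^\gS, C_\eta^\Pi$ through \cref{lmth2_2}. The first ingredient I would exploit is the affine structure of $F$: writing $F(z)=Mz+q$ with the linear part $M$ built from the blocks $A,B,A\trans,B\trans$, one checks that $M$ is skew-symmetric, so $x\trans M x = 0$ for all $x$ and hence $(z-z')\trans(F(z)-F(z'))=0$, a sharpening of the monotonicity noted below the definition of $F$. This identity lets me replace $(z-\tz\kp)\trans F(\tz\kp)$ by $(z-\tz\kp)\trans F(z)$ in \eqref{eq:kpvi2} with no loss, which is what makes the $F$-terms collapse cleanly after averaging.

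Next I would handle the quadratic term. Applying \cref{lmth2} to the remaining summand $(z-\tz\kp)\trans H\itk \resp{z\subk}$ yields, for each $k$,
\begin{equation*}
\phi(y)-\phi(y\kp)+(z-\tz\kp)\trans F(z) \geq \tfrac12\left(\| z\kp-z\|^2_{H\itk}-\| z\itk-z\|^2_{H\itk}\right).
\end{equation*}
Summing from $k=0$ to $l-1$ and using $\bar z^l=\tfrac1l\sum_{k=1}^{l}\tz\itk$, the left-hand side becomes $l\phi(y)-\sum_{k=1}^{l}\phi(y\itk)+l\,(z-\bar z^l)\trans F(z)$, while the right-hand side is the near-telescoping sum $\tfrac12\sum_{k=0}^{l-1}(\| z\kp-z\|^2_{H\itk}-\| z\itk-z\|^2_{H\itk})$.

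The main obstacle, and the crux of the argument, is that this sum does \emph{not} telescope exactly, because $H\itk$ varies with $k$. Reindexing, it equals $\tfrac12\bigl(-\|z^0-z\|^2_{H^0}+\|z^l-z\|^2_{H^{l-1}}+\sum_{i=1}^{l-1}(\|z^i-z\|^2_{H^{i-1}}-\|z^i-z\|^2_{H^i})\bigr)$. I would discard the nonnegative end term $\|z^l-z\|^2_{H^{l-1}}$ (valid since we seek a lower bound) and then bound the residual defect $\sum_{i}(\|z^i-z\|^2_{H^{i-1}}-\|z^i-z\|^2_{H^i})$ from below by invoking \cref{lmth2_2}, which under \cref{as1} caps exactly this mismatch by the $l$-independent constant $-C_\eta^\gS C_\eta^\Pi(\|z-z\opt\|^2_{H^0}+\|\reso{z_1}\|^2_{H^0})$. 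Here the summability of $(\eta\itk)^2$ in \cref{as1} is indispensable: it guarantees the accumulated mismatch stays bounded, so the eventual $1/l$ rate is not destroyed.

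Finally I would pass back to the averaged iterate. Convexity of $\phi=f+g$ gives $\phi(\bar y^l)\le \tfrac1l\sum_{k=1}^{l}\phi(y\itk)$, hence $-\sum_{k=1}^l\phi(y\itk)\le -l\,\phi(\bar y^l)$, and the skew-symmetry identity again gives $(z-\bar z^l)\trans F(z)=(z-\bar z^l)\trans F(\bar z^l)$. Substituting both into the accumulated inequality and dividing by $l$ produces precisely \eqref{eq:thm2}. No step beyond the telescoping bookkeeping is delicate; the only genuine difficulty is organizing the non-telescoping quadratic terms so that \cref{lmth2_2} applies with a constant independent of $l$, which is what delivers the $O(1/k)$ rate.
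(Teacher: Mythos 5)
Your proposal is correct and follows essentially the same route as the paper's proof: the skew-symmetry identity $(z-z')\trans F(z)=(z-z')\trans F(z')$, the per-iteration bound from \eqref{eq:kpvi2} plus \cref{lmth2}, summation, control of the non-telescoping quadratic terms via \cref{lmth2_2}, convexity of $\phi$, and a final division by $l$. The only difference is cosmetic bookkeeping --- you regroup the near-telescoping sum directly into boundary terms plus a mismatch sum over $k=1,\ldots,l-1$, whereas the paper adds and subtracts $\|z-z\itk\|^2_{H\itk}$ to split it into an exact telescope plus a mismatch sum over $k=1,\ldots,l$; both are handled identically by \cref{lmth2_2}.
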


\begin{proof}
We can verify with simple algebra that 
\begin{equation}
(z-z')\trans F(z) = (z-z')\trans F(z'). \label{eq:th2t1}
\end{equation}
Apply \eqref{eq:th2t1} with $z'=\tz\kp$, and combine VI \eqref{eq:kpvi2} and \cref{lmth2} to get
\begin{align}
& \phi(y)-\phi(y\kp) + (z-\tz\kp)\trans F(z)  \\
 = & \phi(y)-\phi(y\kp) + (z-\tz\kp)\trans F(\tz\kp)  \\
 \geq & (\tz\kp-z)\trans H\itk \resp{z\subk} \\
 \geq & \frac{1}{2} (\| z\kp-z\|^2_{H\itk} - \| z\itk - z\|^2_{H\itk}).
\end{align}
Summing for $k=0$ to $l-1$ gives us
\begin{align}
\begin{split}
& \sum\nolimits_{k=1}^{l} \phi(y)-\phi(y\itk) + (z-\tz\itk)\trans F(z)  \\
 \geq & \frac{1}{2} \sum\nolimits_{k=1}^{l} \large(\| z - z\itk\|^2_{H\km} - \| z - z\km \|^2_{H\km}\large). \label{eq:th2t2}
 \end{split}
\end{align}
Since $\phi(y)$ is convex, the left hand side of \eqref{eq:th2t2} satisfies,
\begin{align}
LHS = & \, l\, \phi(y) - \sum_{k=1}^{l} \phi(y\itk) + (l \, z- \sum_{k=1}^{l}\tz\itk)\trans F(z)  \nonumber \\
 \leq & \, l\, \phi(y) - l\, \phi(\bar y^l) + (l \, z- l\, \bar z^l)\trans F(z).  \label{eq:th2t3}
\end{align}
Applying \cref{lmth2_2}, we see the right hand side satisfies,
\begin{align}
 \begin{split}
  RHS = & \frac{1}{2} \sum_{k=1}^{l} (\| z - z\itk\|^2_{H\itk} - \| z - z\km \|^2_{H\km}) + \\
  & \quad \frac{1}{2} \sum_{k=1}^{l} (\| z - z\itk\|^2_{H\km} - \| z - z\itk \|^2_{H\itk})
  \end{split}\\
  \begin{split}
  \geq & \frac{1}{2}  (\| z - z^l\|^2_{H^l} - \| z - z^0 \|^2_{H^0}) + \\
  & \quad -\frac{1}{2} C_{\eta}^{\gS} C_{\eta}^{\Pi} ( \|  z-z\opt \|^2_{H^{0}} +  \| \reso{z_{1}} \|^2_{H^{0}})
  \end{split}\\
  \begin{split}
    \geq & -\frac{1}{2}  (\| z - z^0 \|^2_{H^0} + C_{\eta}^{\gS} C_{\eta}^{\Pi}  \|  z-z\opt \|^2_{H^{0}} + \\
     &\qquad C_{\eta}^{\gS} C_{\eta}^{\Pi} \| \reso{z_{1}} \|^2_{H^{0}}).
     \end{split}\label{eq:th2t4}
\end{align}
Combining inequalities \eqref{eq:th2t2}, \eqref{eq:th2t3} and \eqref{eq:th2t4}, and letting $z' = \bar z\itk$ in \eqref{eq:th2t1} yields the $O(1/k)$ convergence rate in \eqref{eq:thm2}
%{\small %
%\begin{equation}
%%\hspace{-10mm} 
%\begin{split}
%&\phi(y)-\phi(\bar y^{l}) + (z-\bar z^{l})\trans F(\bar z^{l}) \geq  -\frac{1}{2\, l}  \large(\| z - z^0 \|^2_{H^0}  \\
%& \qquad +  C_{\eta}^{\gS} C_{\eta}^{\Pi}  \|  z-z\opt \|^2_{H^{0}} + C_{\eta}^{\gS} C_{\eta}^{\Pi} \| \reso{z_{1}} \|^2_{H^{0}}\large).
%\end{split} \label{eq:thm2}
%\end{equation}
%}%
\end{proof}

\section{Adaptive Consensus ADMM (ACADMM)} \label{sec:adapt}
To address the issue of how to automatically tune parameters on each node for optimal performance,
we propose \emph{adaptive consensus ADMM} (ACADMM), 
%which enhances the spectral penalty parameter in \cite{xu2016adaptive} 
which sets worker-specific penalty parameters by exploiting curvature
information. We derive our method from the dual interpretation of ADMM --
\emph{Douglas-Rachford splitting} (DRS) -- using a diagonal penalty matrix. We
then derive the spectral stepsizes for consensus problems by assuming the
curvatures of the objectives are diagonal matrices with diverse parameters on
different nodes.  At last, we discuss the practical computation of the
spectral stepsizes from consensus ADMM iterates and apply our theory in
\cref{sec:theory} to guarantee convergence. 

\subsection{Dual interpretation of generalized ADMM}
\label{sec:admmdrs}
The dual form of problem \eqref{eq:prob} can be written 
\begin{equation}
\min_{\gl\in \bbR^p}  \underbrace{f^*(A^{T}\gl) - \inprod{\gl, b}}_{ \hat{f}(\gl)} + \underbrace{g^*(B^{T}\gl)}_{\hat{g}(\gl)},
\label{eq:dprob}
\end{equation}
where $\gl$ denotes the dual variable, while $f^*, g^*$ denote the Fenchel conjugate of $f,g$ \cite{Rockafellar}.
It is known that ADMM steps for the primal problem \eqref{eq:prob} are equivalent to performing \emph{Douglas-Rachford splitting} (DRS) on the dual problem \eqref{eq:dprob} \cite{eckstein1992douglas,xu2016adaptive}. In particular, the generalized ADMM iterates satisfy the DRS update formulas
\begin{align}
0 & \in (T\itk)\inv (\hat\gl\kp - \gl\itk) + \partial \hat f(\hat \gl \kp) + \partial \hat g(\gl\itk) \label{eq:gdrs1}\\
0 & \in (T\itk)\inv (\gl\kp - \gl\itk) + \partial \hat f(\hat \gl \kp) + \partial \hat g(\gl\kp), \label{eq:gdrs2}
\end{align}
where $\hat \gl$ denotes the intermediate variable defined in \cref{sec:pre}.
We prove the equivalence of generalized ADMM and DRS in the
supplementary material.
%The key idea is to derive the subgradient of $\hat f, \hat g$ in  \eqref{eq:dprob} with optimal conditions of (\ref{eq:gadmm1}, \ref{eq:gadmm2}) as
%\begin{equation} \label{eq:conjgrad}
% Au\kp-b \in  \partial \hat f( \hat{\gl}\kp) \ \text{ and } \ Bv\kp \in \partial \hat{g}(\gl\kp).
% \end{equation}

\subsection{Generalized spectral stepsize rule}
\label{sec:proposition}
\citet{xu2016adaptive} first derived spectral penalty parameters for ADMM using the DRS.
% which can be effectively estimated by using ADMM iterates for the primal problem, thanks to the equivalence of primal ADMM and dual DRS. 
\cref{prop:gen} in \cite{xu2016adaptive} proved that the minimum residual of DRS can be obtained by setting the scalar penalty to $\tau\itk =1/\sqrt{\alpha\, \beta}$, where we assume the subgradients are locally linear as
\begin{equation}%
\partial \hat f(\hat \gl) = \alpha \, \hat \gl + \Psi ~~~~~\text{and}~~~~~\partial \hat g(\gl) = \beta \, \gl + \Phi, \label{eq:loclin}
\end{equation} %
$\ga,\gb \in \bbR$ represent scalar curvatures, and $\Psi, \Phi \subset \bbR^p$.

We now present generalized spectral stepsize rules that can accomodate consensus problems.

%\tom{KILL THIS FOR LENGTH!!!!!!!!!!}
%\begin{proposition}[Spectral DRS \cite{xu2016adaptive}]
%Suppose the Douglas-Rachford splitting steps are used,
%\begin{align}%
%0 & \in  (\hat{\gl}\kp-\gl\itk)/\tau\itk  + \partial \hat f(\hat \gl\kp) +  \partial \hat g(\gl\itk) \label{eq:dr1}\\
%0 & \in  (\gl\kp-\gl\itk)/\tau\itk + \partial \hat f(\hat{\gl }\kp) + \partial \hat g(\gl\kp), \label{eq:dr2}
%\end{align}%
%and assume the subgradients are locally linear,
%\begin{equation}%
%\partial \hat f(\hat \gl) = \alpha \, \hat \gl + \Psi ~~~~~\text{and}~~~~~\partial \hat g(\gl) = \beta \, \gl + \Phi, \label{eq:loclin}
%\end{equation} %
%where $\ga,\gb \in \bbR, \, \Psi, \Phi \subset \bbR^p$. Then, the minimal residual of $\hat{f}(\gl\kp) + \hat{g}(\gl\kp)$ is obtained by setting $\tau\itk =1/\sqrt{\alpha\, \beta}$.
%\label{pp:spectral}
%\end{proposition}

\begin{proposition}[Generalized spectral DRS] \label{prop:gen}
Suppose the generalized DRS steps (\ref{eq:gdrs1}, \ref{eq:gdrs2}) are used,
and assume the subgradients are locally linear,
\begin{equation}%
\partial \hat f(\hat \gl) = M_{\ga} \, \hat \gl + \Psi ~~~~~\text{and}~~~~~\partial \hat g(\gl) = M_{\gb} \, \gl + \Phi . \label{eq:gloclin}
\end{equation} %
for matrices $M_{\ga} = \text{diag}(\ga_1 I_d,\ldots, \ga_N I_d)$ and $M_{\gb} = \text{diag}(\gb_1 I_d,\ldots, \gb_N I_d)$, and some $\Psi, \Phi \subset \bbR^p$. Then the minimal residual of $\hat{f}(\gl\kp) + \hat{g}(\gl\kp)$ is obtained by setting $\tau\subi\itk =1/\sqrt{\alpha\subi \, \beta\subi},\, \forall i=1,\ldots,N$.
\label{pp:gspectral}
\end{proposition}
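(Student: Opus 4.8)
The plan is to exploit the block-diagonal structure shared by $M_{\ga}$, $M_{\gb}$, and $T\itk$ in order to reduce the generalized (matrix-penalty) DRS problem to $N$ decoupled scalar problems, each of which is already handled by the scalar spectral DRS result of \citet{xu2016adaptive}. First I would observe that all three operators are block diagonal in the node partition, with the $i$-th block equal to $\ga\subi I_d$, $\gb\subi I_d$, and $\tau\subi\itk I_d$ respectively; in particular they are simultaneously block-diagonal and commute. I would then substitute the local-linearity model \eqref{eq:gloclin} into the DRS updates \eqref{eq:gdrs1} and \eqref{eq:gdrs2}, fixing a selection $\psi \in \Psi$ and $\phi \in \Phi$. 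Because every operator acts block-wise and the offsets split block-wise, the resulting equations separate: the $i$-th block of $(\hat\gl\kp, \gl\kp)$ depends only on the $i$-th block of $\gl\itk$ together with the scalars $\ga\subi, \gb\subi, \tau\subi\itk$, with no cross-node coupling.

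Next, I would note that each block subsystem is formally identical to the scalar DRS recursion analyzed in \cite{xu2016adaptive}, with curvatures $\ga\subi, \gb\subi$ and penalty $\tau\subi\itk$; the factor $I_d$ is immaterial, since a scalar multiple of the identity means the $d$ coordinates inside a block share the same curvature and are themselves decoupled. Correspondingly, the residual of $\hat f(\gl\kp) + \hat g(\gl\kp)$ decomposes as an unweighted sum $\sum_{i=1}^{N} \rho_i(\tau\subi\itk)$ of per-node residuals, where each $\rho_i$ depends only on $\ga\subi, \gb\subi$, and $\tau\subi\itk$. This separability is the crux of the argument: it converts the joint minimization over the full penalty matrix $T\itk$ into $N$ independent scalar minimizations.

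Finally, I would apply the scalar result of \cite{xu2016adaptive} termwise: each $\rho_i$ is minimized by the choice $\tau\subi\itk = 1/\sqrt{\ga\subi\,\gb\subi}$. Since the summands are decoupled, the penalty matrix that minimizes every $\rho_i$ simultaneously also minimizes the sum, which yields the claimed stepsizes $\tau\subi\itk = 1/\sqrt{\ga\subi\,\gb\subi}$ for all $i = 1,\ldots,N$.

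The main obstacle I anticipate is establishing the separability rigorously — specifically, verifying that the set-valued offsets $\Psi$ and $\Phi$ can be decomposed consistently across blocks, so that both the DRS recursion and, more importantly, the residual functional genuinely split as an unweighted sum of per-node terms with no interaction across nodes. This is precisely where the modeling assumption that $M_{\gb}$ is block-diagonal does the work, even though the true curvature of $\hat g$ couples the node variables through $B$. Once the clean separation is in place, the reduction to the scalar case and the invocation of the known scalar minimizer $1/\sqrt{\ga\subi\,\gb\subi}$ are routine.
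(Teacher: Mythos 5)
Your proposal follows essentially the same route as the paper's proof: substitute the linear model \eqref{eq:gloclin} into the DRS steps (\ref{eq:gdrs1}, \ref{eq:gdrs2}), use the simultaneous block-diagonality of $M_{\ga}$, $M_{\gb}$, and $T\itk$ to decouple the system into $N$ independent per-node scalar recursions, apply the scalar spectral result of \citet{xu2016adaptive} to each block, and conclude by separability. The only cosmetic difference is that the paper aggregates the per-node residuals as $\sqrt{\sum_{i=1}^N (r_{DR,i}\kp)^2}$ rather than a plain sum, which changes nothing since minimizing each term independently still minimizes the monotone aggregate.
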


\begin{proof}
Substituting subgradients $\partial \hat f(\hat \gl), \partial \hat g(\gl)$ into the generalized DRS steps (\ref{eq:gdrs1}, \ref{eq:gdrs2}), and using our linear assumption \eqref{eq:gloclin} yields 
{\small%
\begin{align*}
0  &\in (T\itk)\inv (\hat\gl\kp - \gl\itk) + (M_{\ga} \, \hat \gl \kp + \Psi) + (M_{\gb} \, \gl \itk + \Phi) \\
0  &\in (T\itk)\inv (\gl\kp - \gl\itk) + (M_{\ga} \, \hat \gl \kp + \Psi) + (M_{\gb} \, \gl \kp + \Phi).
\end{align*}
}%
Since $T\itk, M_{\ga}, M_{\gb}$ are diagonal matrices, we can split the equations into independent blocks, $\forall i = 1,\ldots, N,$
{\small%
\begin{align*}
0  &\in (\hat\gl\subi\kp - \gl\subi\itk)/\tau\subi\itk + (\ga\subi \, \hat \gl \kp + \Psi\subi) + (\gb\subi \, \gl \itk + \Phi\subi) \\
0  &\in  (\gl\subi\kp - \gl\subi\itk)/\tau\subi\itk + (\ga\subi \, \hat \gl \kp + \Psi\subi) + (\gb\subi \, \gl \kp + \Phi\subi).
\end{align*}
}%
Applying \cref{prop:gen} in \cite{xu2016adaptive}
to each block, $\tau\subi\itk =1/\sqrt{\alpha\subi \, \beta\subi}$ minimizes the block residual represented by $r_{DR,i}\kp = \| (\ga\subi+\gb\subi)\gl\kp + (a\subi+b\subi)\|$, where $a\subi\in \Psi\subi, b\subi\in\Phi\subi$. Hence the residual norm at step $k+1,$ which is $ \| (M_{\ga}+M_{\gb})\gl\kp + (a+b)\| = \sqrt{\sum_{i=1}^N (r_{DR,i}\kp)^2}$ is minimized by setting $\tau\subi\itk =1/\sqrt{\alpha\subi \, \beta\subi},\, \forall i=1,\ldots,N$.
\end{proof}

\subsection{Stepsize estimation for consensus problems}
Thanks to the equivalence of ADMM and DRS, \cref{pp:gspectral} can also be used to guide the selection of the ``optimal'' penalty parameter. We now show that the generalized spectral stepsizes can be estimated from the ADMM iterates for the primal consensus problem \eqref{eq:cprob}, without explicitly supplying the dual functions. 

The subgradients of dual functions $\partial \hat f, \, \partial \hat g$ can be computed from the ADMM iterates using the identities derived from (\ref{eq:gadmm1}, \ref{eq:gadmm2}), 
\begin{equation} \label{eq:conjgrad}
 Au\kp-b \in  \partial \hat f( \hat{\gl}\kp) \ \text{ and } \ Bv\kp \in \partial \hat{g}(\gl\kp).
 \end{equation}
For the consensus problem we have $A=I_{dN}$, $B = -(I_d;\ldots;I_d) $,  and $b=0,$ and so 
\begin{align} 
 (u_1\kp; \, \ldots; \, u_N\kp) & \in  \partial \hat f( \hat{\gl}\kp) \\
  -(\underbrace{v\kp; \, \ldots; \, v\kp}_{N \text{ duplicates of }v\kp}) & \in \partial \hat{g}(\gl\kp).
 \end{align}
If we approximate the behavior of these sub-gradients using the linear approximation \eqref{eq:gloclin}, and break the sub-gradients into blocks (one for each worker node), we get (omitting iteration index $k$ for clarity)
\begin{equation}%
u\subi = \ga\subi \, \hat \gl \subi + a\subi \text{ and } -v = \gb\subi \, \gl \subi + b\subi, \,\,\forall i \label{eq:spectralll}
\end{equation}%
where $\ga_i$ and $\gb_i$ represent the {\em curvature} of local functions $\hat f_i$ and $\hat g_i$ on the $i$th node.

We select stepsizes with a two step procedure, which follows the spectral stepsize literature. First, we estimate the local curvature parameters, $\ga_i$ and $\gb_i,$ by finding least-squares solutions to \eqref{eq:spectralll}.  Second, we plug these curvature estimates into the formula $\tau\subi\itk =1/\sqrt{\alpha\subi \, \beta\subi}.$ This formula produces the optimal stepsize when $\hat f$ and $\hat g$ are well approximated by a linear function, as shown in Proposition~\ref{prop:gen}.

For notational convenience, we work with the quantities
%\begin{equation}
$
\hat\ga\itk\subi = 1/\ga\subi, \ \hat\gb\itk\subi = 1/\gb\subi,
$
%\end{equation}
which are estimated on each node using the current iterates $u\subi\itk, v\itk, \gl\subi\itk, \hat\gl\subi\itk$ and also an older iterate $u\subi\ito, v\ito, \gl\subi\ito, \hat\gl\subi\ito, k_0 < k$. Defining
$
\gD u\subi\itk = u\subi\itk - u\subi\ito, \ \gD \hat\gl\subi\itk = \hat\gl\subi\itk - \hat\gl\subi\ito
$
and following the literature for Barzilai-Borwein/spectral stepsize estimation, there are two least squares estimators that can be obtained from~\eqref{eq:spectralll}:
\begin{equation}
\hga\itk_{\mbox{\scriptsize SD}, i} = \frac{\inprod{\gD \hat{\gl}\itk\subi, \gD \hat{\gl}\itk\subi}}{\inprod{\gD u\itk\subi, \gD \hat{\gl}\itk\subi}}
\,\ \text{and} \,\
\hga\itk_{\mbox{\scriptsize MG}, i} = \frac{\inprod{\gD u\itk\subi, \gD \hat{\gl}\itk}}{\inprod{\gD u\itk\subi, \gD u\itk\subi}} \label{eq:alpha2}
\end{equation}
where SD stands for {\em steepest descent}, and MG stands for {\em minimum gradient}.
\citep{zhou2006gradient} recommend using a hybrid of these two estimators, and choosing 
\begin{eqnarray}\label{eq:alpha}
\hat{\ga}\subi\itk =
\begin{cases}
\hat{\ga}\itk_{\mbox{\scriptsize MG},i}& \ \text{if}\ 2 \,\hat{\ga}\itk_{\mbox{\scriptsize MG}, i} > \hat{\ga}\itk_{\mbox{\scriptsize SD}, i} \\
\hat{\ga}\itk_{\mbox{\scriptsize SD}, i} - \hat{\ga}\itk_{\mbox{\scriptsize MG}, i} /2  &~~\text{otherwise.}
\end{cases}
\end{eqnarray}
It was observed that this choice worked well for non-distributed ADMM in \cite{xu2016adaptive}.
We can similarly estimate $\hat\gb\subi\itk$ from
$\gD v\itk = - v\itk + v\ito$ and $\gD \gl\subi\itk = \gl\subi\itk - \gl\subi\ito
$. 

ACADMM estimates the curvatures in the original $d$-dimensional feature space, and avoids
estimating the curvature in the higher $Nd$-dimensional feature space (which grows
with the number of nodes $N$ in AADMM \cite{xu2016adaptive}), which is especially useful for heterogeneous data with different distributions allocated to different
nodes. The overhead of our adaptive scheme is only a few inner products, and the computation is naturally distributed on
different workers.
%\zheng{add the paragraph here, stay tuned}

%ACADMM relies on a linear subgradient model on each local node, which
%estimates curvatures in the original $d$-dimensional feature space, and avoids
%estimating the curvature in the higher $Nd$-dimensional feature space growing
%with number of nodes $N$. ACADMM obtains more accurate local curvatures for
%heterogeneous data with different distributions allocated to different
%nodes. The overhead of our adaptive scheme only requires a few inner products of the
%iterates from consensus ADMM, and the computation is naturally distributed on
%different workers.
%Moreover,  each local spectral penalty parameter is  naturally estimated in
%parallel, despite the overhead of the adaptive scheme only requires a few
%inner products. 
% !TEX root =acadmm.tex
%{\small%
\begin{algorithm}[t]
\caption{Adaptive consensus ADMM (ACADMM)}
\label{alg}
\begin{algorithmic}[1]
\REQUIRE  initialize $v^0$, $\gl\subi^0$, $\tau\subi^0$, $k_0 \! = \! 0$,
\WHILE{not converge by \eqref{eq:stop} \textbf{and} $k <\text{maxiter}$}
\STATE Locally update $u\subi\itk$ on each node by 
%consensus ADMM step 
\eqref{eq:cadmms}
\STATE Globally update $v\itk$ on central server by 
%consensus ADMM step 
\eqref{eq:cadmm2}
\STATE Locally update dual variable $\gl\subi\itk$  on each node by 
%consensus ADMM step \
\eqref{eq:cadmme}
\IF{$\text{mod}(k, T_{f}) = 1$}
\STATE Locally update $\hgl\subi\itk = \gl\subi\km +\tau\subi\itk (v\km - u\subi\itk)$
\STATE Locally compute spectral stepsizes $\hga\subi\itk,\hgb\subi\itk$ 
%using (\ref{eq:alpha}, \ref{eq:alpha2})
\STATE Locally estimate correlations $\ga\itk_{\mbox{\scriptsize cor}, i} \, , \, \gb\itk_{\mbox{\scriptsize cor}, i}$ 
%using \eqref{eq:corr}
\STATE Locally update $\tau\subi\kp$ using \eqref{eq:final}
\STATE  $k_0 \gets k$
\ELSE
\STATE $\tau\subi\kp \gets \tau\subi\itk$
\ENDIF
\STATE $k \gets k+1$
\ENDWHILE
%\vspace{-5mm}
\end{algorithmic}
\end{algorithm} %}%
% !TEX root =acadmm.tex
\begin{table*}[t]
\vspace{-2mm}
\centering
\caption{\small Iterations (and runtime in seconds);128 cores are used; absence of convergence after $n$ iterations is indicated as $n+$.
\vspace{-2mm}
}
\setlength{\tabcolsep}{3pt}
\small
\begin{threeparttable}
\begin{tabular}{|c|c|c||c|c|c|c|>{\bfseries}c|}
\hline
Application & Dataset & \tabincell{c}{\#samples $\times$ \\ \#features
%\tnote{1}
\textsuperscript{1}
} & \tabincell{c}{CADMM \\ \cite{boyd2011admm}} & \tabincell{c}{RB-ADMM \\ \cite{he2000alternating}} & \tabincell{c}{AADMM \\ \cite{xu2016adaptive}} & \tabincell{c}{CRB-ADMM \\ \cite{song2015fast}} &  \tabincell{c}{Proposed \\ ACADMM} \\
\hline\hline
\multirow{7}{*}{\tabincell{c}{Elastic net\\regression}}
&Synthetic1 & 64000 $\times$ 100 &   1000+(1.27e4) &	94(1.22e3)	& \textbf{43(563)} & 106(1.36e3) &	48(623) \\
&Synthetic2 & 64000 $\times$ 100 & 1000+(1.27e4)	 & 130(1.69e3) & 341(4.38e3)	& 140(1.79e3)	& 57(738) \\
& MNIST & 60000 $\times$ 784 & 100+(1.49e4) & 88(1.29e3) & 40(5.99e3) & 87(1.27e4) & 14(2.18e3) \\
& CIFAR10
%\tnote{2} 
\textsuperscript{2}
& 10000 $\times$ 3072 & 100+(1.04e3) & 100+(1.06e3) & 100+(1.05e3) & 100+(1.05e3) & 35(376) \\
& News20 & 19996 $\times$ 1355191 & 100+(4.61e3) & 100+(4.60e3) & 100+(5.17e3) & 100+(4.60e3) & 78(3.54e3) \\
& RCV1 & 20242 $\times$ 47236  & 33(1.06e3) & 31(1.00e3) & 20(666) & 31(1.00e3) & 8(284) \\
& Realsim & 72309 $\times$ 20958 & 32(5.91e3) & 30(5.59e3) & 14(2.70e3) & 30(5.57e3) & 9(1.80e3) \\
\hline
\multirow{7}{*}{\tabincell{c}{Sparse \\ logistic \\ regression}}
&Synthetic1 & 64000 $\times$ 100 &  138(137) &	78(114)  & 80(101) &	48(51.9)&	24(29.9) \\
&Synthetic2 & 64000 $\times$ 100 &  317(314) &	247(356) &	1000+(1.25e3) &	1000+(1.00e3)&	114(119)\\
& MNIST & 60000 $\times$ 784 & 325(444) & 212(387) & 325(516) & 203(286) & 149(218) \\
& CIFAR10 & 10000 $\times$ 3072 & 310(700) & 152(402) & 310(727) & 149(368) & 44(118) \\
& News20 & 19996 $\times$ 1355191 & 316(4.96e3) & 211(3.84e3) & 316(6.36e3) & 207(3.73e3) & 137(2.71e3) \\
& RCV1 & 20242 $\times$ 47236  & 155(115) & 155(116) & 155(137) & 155(115) & 150(114) \\
& Realsim & 72309 $\times$ 20958 & 184(77) & 184(77) & 184(85) & 183(77) & 159(68) \\
\hline
\multirow{7}{*}{\tabincell{c}{Support \\ Vector \\ Machine}} 
&Synthetic1 & 64000 $\times$ 100 &  33(35.0) &	33(49.8) &	\textbf{19(27)} &	26(28.4)	& 21(25.3) \\
&Synthetic2 & 64000 $\times$ 100 &  283(276) &	69(112) &	1000+(1.59e3) &	81(97.4) &	25(39.0)\\
& MNIST & 60000 $\times$ 784 & 1000+(930) & 172(287) & 73(127) & 285(340) & 41(88.0) \\
& CIFAR10 & 10000 $\times$ 3072 & 1000+(774) & 227(253) & 231(249) & 1000+(1.00e3) & 62(60.2) \\
& News20 & 19996 $\times$ 1355191 & 259(2.63e3) & 262(2.74e3) & 259(3.83e3) & 267(2.78e3) & 217(2.37e3) \\
& RCV1 & 20242 $\times$ 47236  & 47(21.7) & 47(21.6) & 47(31.1) & 40(19.0) & 27(15.4) \\
& Realsim & 72309 $\times$ 20958 & 1000+(76.8) & 1000+(77.6) & 442(74.4) & 1000+(79.3) & 347(41.6) \\
\hline
\multirow{1}{*}{\tabincell{c}{SDP}}
& Ham-9-5-6 & 512 $\times$ 53760 & 100+(2.01e3) & 100+(2.14e3) & 35(860) & 100+(2.14e3) & 30(703) \\
\hline
\end{tabular}%
\label{tab:exp}%
\begin{tablenotes}
    \item \textsuperscript{1} \#vertices $\times$ \#edges for SDP; \qquad \textsuperscript{2}We only use the first training batch of CIFAR10.
%\item[1] \#vertices $\times$ \#edeges for SDP. \item[2] We only use the first training batch of CIFAR10. 
   %  \vspace{-2mm}
\end{tablenotes}
\end{threeparttable}
\vspace{-4mm}
\end{table*}%

\subsection{Safeguarding and convergence}
Spectral stepsizes for gradient descent methods are equipped with
safeguarding strategies like backtracking line search to handle inaccurate
curvature estimation and to guarantee convergence.  To safeguard the proposed
spectral penalty parameters, we check whether our linear
subgradient assumption is reasonable before updating the stepsizes.  
We do this by testing that the correlations
\begin{equation}\label{eq:corr}
\small
\ga\itk_{\mbox{\scriptsize cor}, i} = \frac{\inprod{\gD u\subi\itk, \gD \hat{\gl}\subi\itk}}{ \| \gD u\subi \itk\| \, \| \gD \hat{\gl}\subi\itk\| } \ \, \text{and} \, \  
\gb\itk_{\mbox{\scriptsize cor}, i} = \frac{\inprod{\gD v\itk, \gD \gl\subi\itk}}{ \| \gD v \itk\| \, \| \gD \gl \subi\itk\| },
\end{equation}
are bounded away from zero by a fixed threshold.
We also bound changes in the penalty parameter by $(1+\nicefrac{C_{\mbox{\scriptsize cg}}}{k^2})$ according to \cref{as1}, which was shown in \cref{thm:acadmm1} and \cref{thm:acadmm2} to guarantee convergence.  The final safeguarded ACADMM rule is 
{\small%
\begin{equation} 
\begin{split}
\hat\tau\subi\kp = &
\begin{cases}
\sqrt{\hat{\ga}\subi\itk \hat{\gb}\subi\itk} &~~\text{if}~~ \ga_{\mbox{\scriptsize cor}, i}\itk > \gep^{\mbox{\scriptsize cor}}~~\text{and}~~\gb_{\mbox{\scriptsize cor}, i}\itk > \gep^{\mbox{\scriptsize cor}}\\
\hat{\ga}\subi\itk &~~\text{if}~~ \ga_{\mbox{\scriptsize cor}, i}\itk > \gep^{\mbox{\scriptsize cor}}~~\text{and}~~\gb_{\mbox{\scriptsize cor}, i}\itk \leq \gep^{\mbox{\scriptsize cor}}\\
\hat{\gb}\subi\itk &~~\text{if}~~ \ga_{\mbox{\scriptsize cor}, i}\itk \leq \gep^{\mbox{\scriptsize cor}}~~\text{and}~~\gb_{\mbox{\scriptsize cor}, i}\itk > \gep^{\mbox{\scriptsize cor}}\\
\tau\subi\itk &~~\text{otherwise},
\end{cases} 
\\
\tau\subi\kp = & \max\{\min\{\hat \tau\subi\kp, \, (1+\frac{C_{\mbox{\scriptsize cg}}}{k^2}) \tau\subi\itk\} \, , \,  \frac{\tau\subi\itk}{1+\nicefrac{C_{\mbox{\scriptsize cg}}}{k^2}} \}.
\end{split}
\label{eq:final}
\end{equation}
}%

\begin{figure*}[t]
\vspace{-2mm}
\centering 
        \begin{subfigure}[t]{0.32\textwidth}
                 \includegraphics[width=1.1\textwidth]{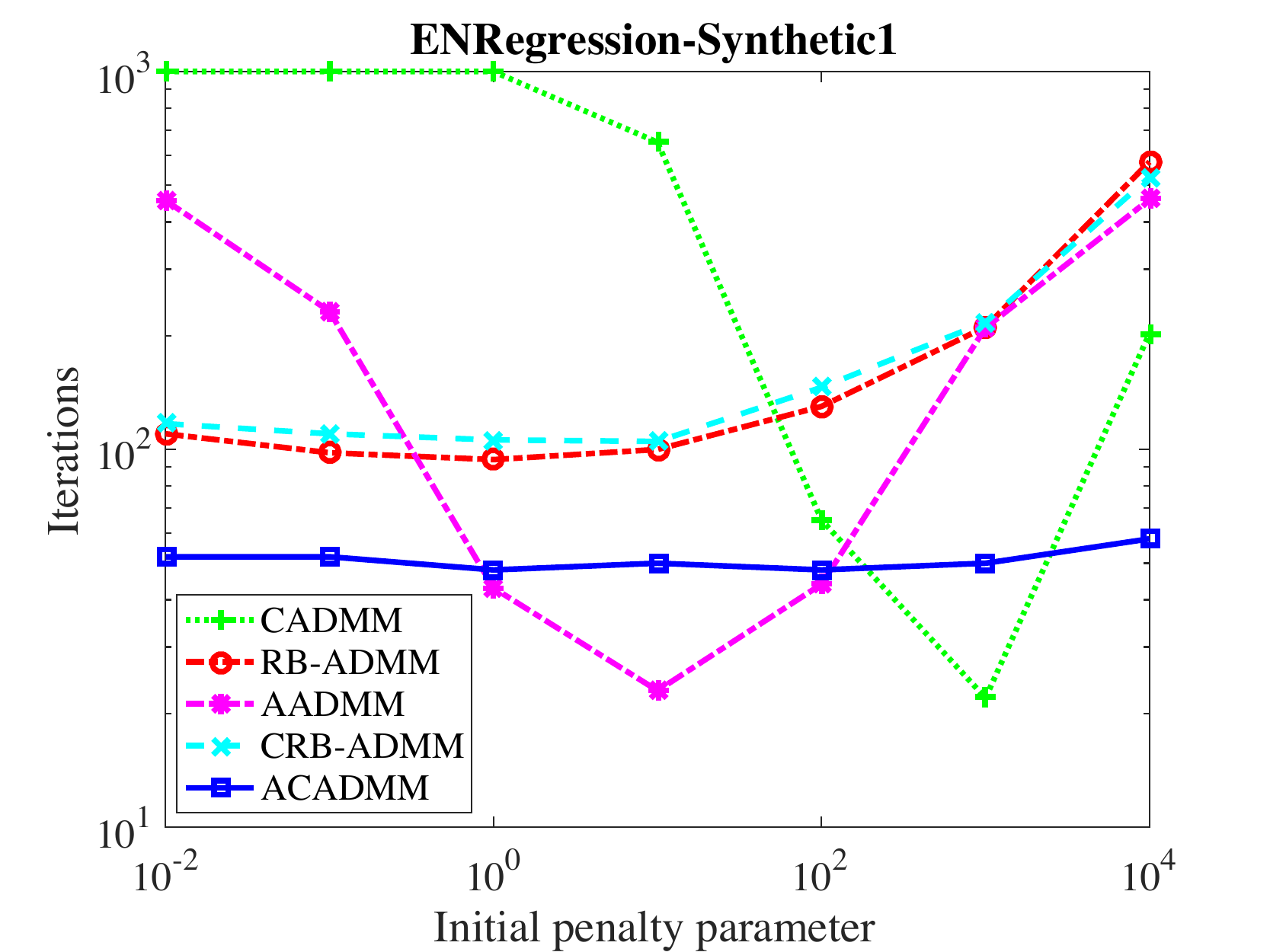}
         \end{subfigure}
         ~
         \begin{subfigure}[t]{0.32\textwidth}
                 \includegraphics[width=1.1\textwidth]{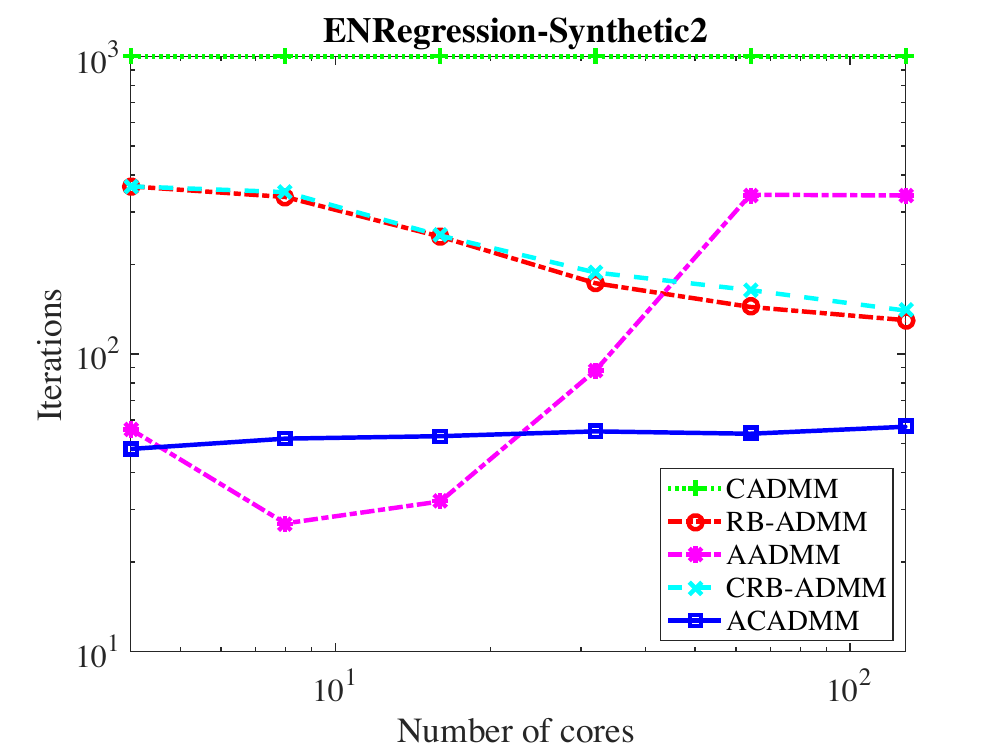}
         \end{subfigure}
         ~
         \begin{subfigure}[t]{0.32\textwidth}
                 \includegraphics[width=1.1\textwidth]{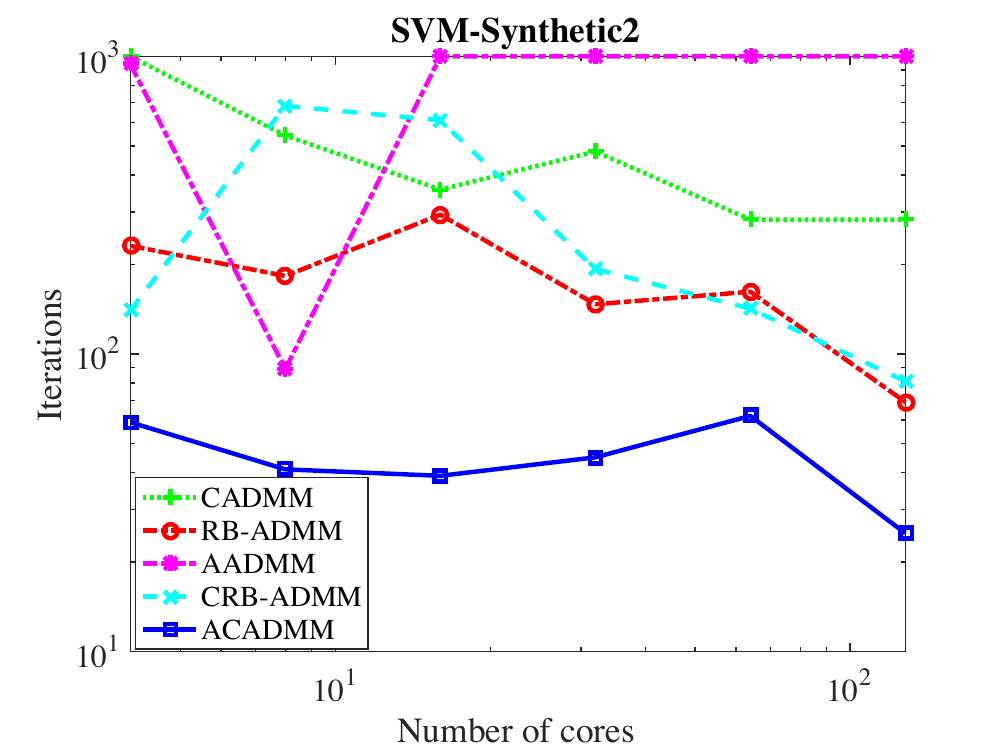}
         \end{subfigure}
         \\
         \begin{subfigure}[t]{0.32\textwidth}
                 \includegraphics[width=1.1\textwidth]{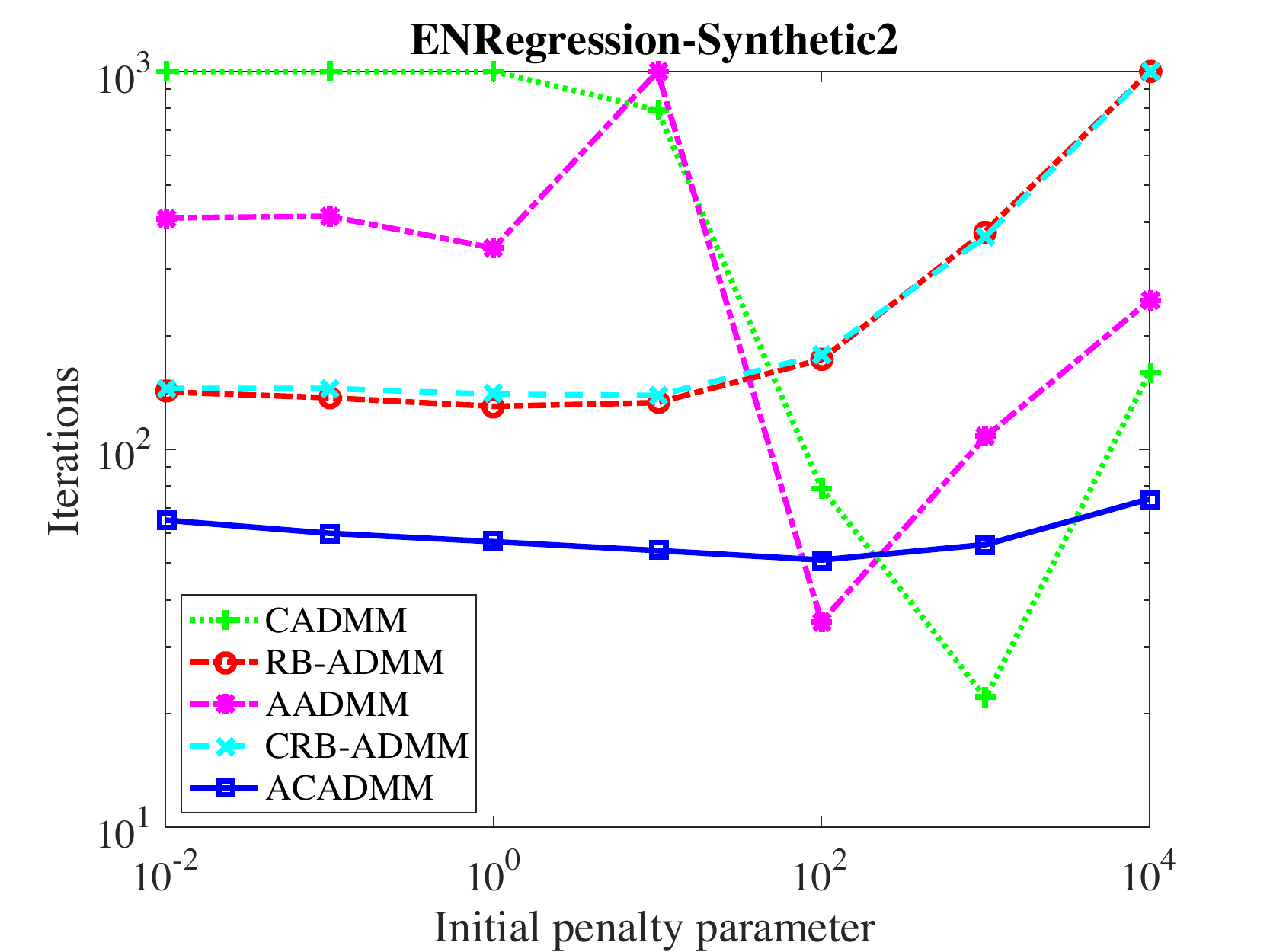}
                 \caption{\small\footnotesize Sensitivity of iteration count to initial penalty $\tau_0$. Synthetic problems of EN regression are studied with 128 cores.} 
                 \label{fig:tau}
         \end{subfigure}
         ~
         \begin{subfigure}[t]{0.32\textwidth}
                 \includegraphics[width=1.1\textwidth]{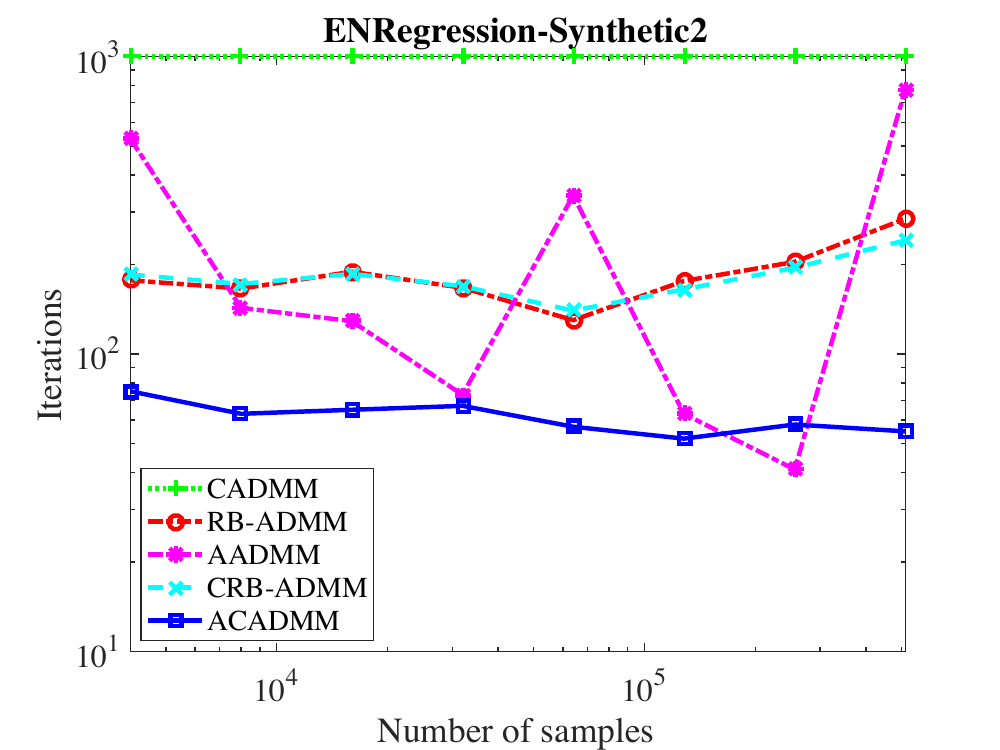}
                 \caption{\small\footnotesize Sensitivity of iteration count to number of cores (top) and number of samples (bottom).} 
					\label{fig:enr}
         \end{subfigure}
         ~
         \begin{subfigure}[t]{0.32\textwidth}
                 \includegraphics[width=1.1\textwidth]{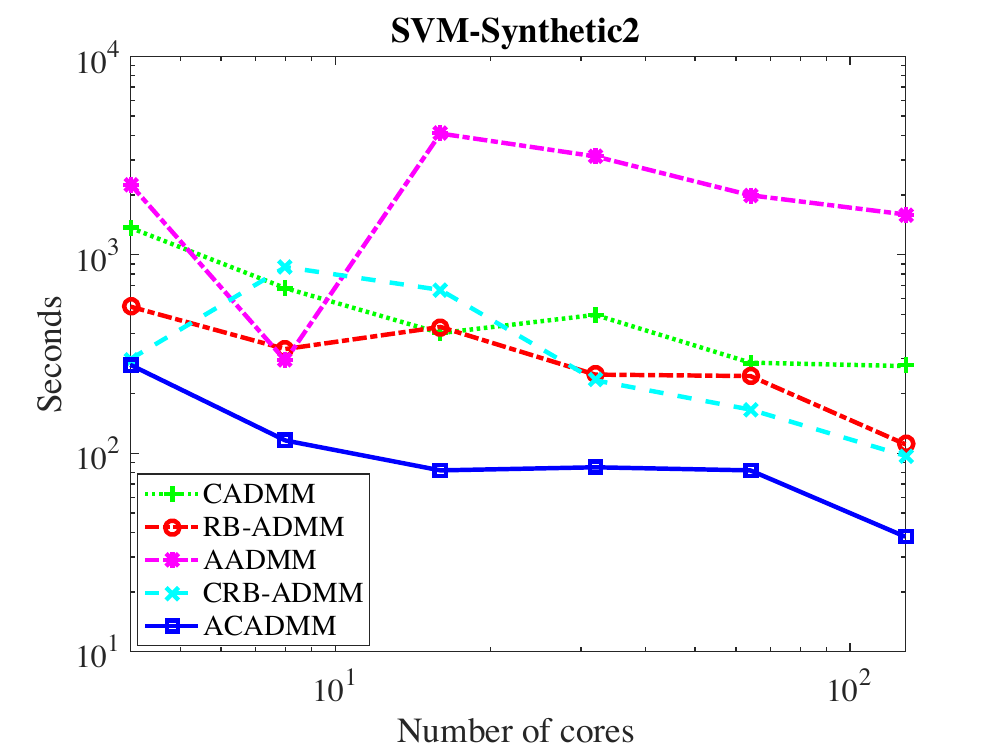}
                 \caption{\small\footnotesize Sensitivity of iteration count (top) and wall time (bottom) to number of cores.} 
					\label{fig:svm}
         \end{subfigure}
          \vspace{-3mm} 
         \caption{
         \small ACADMM is robust to the initial penalty $\tau$, number of cores $N$, and number of training samples.
           \vspace{-5mm}  }
\end{figure*}

%\subsection{ACADMM algorithm}
%\tom{REMOVE SECTION DIVIDER FOR LENGTH}
The complete \textit{adaptive consensus ADMM} is shown in \cref{alg}.
We suggest updating the stepsize every $T_f = 2$ iterations, fixing the
safeguarding threshold $\gep^{\mbox{\scriptsize cor}}=0.2$, and choosing a
large convergence constant $C_{\mbox{\scriptsize cg}}=10^{10}$. 
%We study sensitivity to these parameters. 

\section{Experiments \& Applications}
We now study the performance of ACADMM on benchmark problems, and compare to other methods.
\subsection{Applications} \label{sec:app}
Our experiments use the following test problems that are commonly solved using consensus methods.

\textbf{Linear regression with elastic net regularizer.} 
%Elastic net regularization uses both $\ell_1$- and  $\ell_2$-norm regularizers to induce
%sparsity while preserving groups of highly correlated variables
%\cite{zou2005regularization}. 
We consider consensus formulations of the elastic net \cite{zou2005regularization} with
$f\subi$ and $g$ defined as,
\begin{equation}
f\subi(u\subi) = \frac{1}{2} \| D\subi u\subi -c\subi\|^2, \ g(v) = \rho_1 | v | + \frac{\rho_2}{2} \| v\|^2,
\end{equation}%
where $D\subi\in \bbR^{n\subi\times m} $ is the data matrix on node $i$, and $c\subi$ is a vector of measurements.
%In this case, the ADMM subproblems (\ref{eq:cadmms}, \ref{eq:cadmm2}) have closed form solutions.
%ADMM steps (\ref{eq:cadmms}, \ref{eq:cadmm2}) have closed form solutions,
%{\small%
%\begin{align}
%u\subi\kp &= 
%\begin{cases}
% \text{if}~ n \geq m : \\
%~\ (D\subi^{T}D\subi+\tau\subi\itk I_{m})^{-1}(\tau\subi\itk v\itk  +  \gl\subi\itk  + D\subi^{T}c\subi)  \\
%  \text{if}~ n<m: \\
%~\ (I_m - D\subi^{T} (\tau\subi\itk I_n +  D\subi D\subi^{T})^{-1} D\subi)  \\
%\qquad\qquad\qquad \cdot (v\itk+  (\gl\subi\itk + D\subi^{T}c\subi)/\tau\subi\itk) \\
%\end{cases}
%\\
%v\kp &=  \frac{1}{\sum\limits_{i=1}^{N}\tau\subi\itk+\rho_2}\eqsh(\sum_{i=1}^N(\tau\subi\itk u\subi\kp-\gl\subi\itk), ~ \rho_1).
%\end{align}
%}%
%{\em Shrink} is the proximal of the $\ell_1$ norm  defined by
%\begin{equation}
%\begin{split}
%\eqsh(v, t) &= \arg\min_u | u | + \frac{1}{2t} \| u - v \|^2 \\
%& =  \text{sign}(v) \odot \max\{|v| - t, 0\},
%\end{split}
%\end{equation}%
%where $\text{sign}(\cdot)$ indicates the elementwise sign of a real valued vector, $\odot$ represents the elementwise multiplication.

\textbf{Sparse logistic regression with $\ell_1$ regularizer} 
 can be written in the consensus form for distributed computing,
 {\small%
\begin{equation}
  f\subi(u\subi) =  \sum_{j=1}^{n_i} \log(1+\exp(-c_{i,j}D_{i,j}^Tu_i)),~ g(v) = \rho | v | 
\end{equation}
}%
where $D_{i,j} \in
\bbR^m$ is the $j$th sample, and $c_{i,j}\in \{-1,1\}$ is the corresponding
label. The minimization sub-step \eqref{eq:cadmms} in this case is solved
 by  L-BFGS \cite{liu1989limited}.
% and step \eqref{eq:cadmm2} is solved by the ``shrink'' operator.

\textbf{Support Vector Machines (SVMs)}  minimize the distributed objective function \cite{goldstein2016unwrapping}
{\small%
\begin{equation}
 f\subi(u\subi) =  C \sum_{j=1}^{n_i} \max\{1-c_{i,j}D_{i,j}^Tu\subi, 0\},~ g(v) = \frac{1}{2} \| v\|_2^2
\end{equation}
}%
where $D_{i,j} \in \bbR^m$ is the $j$th sample on the $i$th node, and
$c_{i,j}\in \{-1,1\}$ is its label. The minimization
\eqref{eq:cadmms} is solved by dual coordinate ascent
\cite{chang2011libsvm}. %;  step \eqref{eq:cadmm2} has a closed form solution.

\textbf{Semidefinite programming (SDP)} can be distributed as,
{\small%
\begin{equation}
 f\subi(U\subi) =  \iota\{\mcD\subi(U\subi) = c\subi\},~ g(v) = \inprod{F, V} + \iota\{V \succeq 0\}
\end{equation}
}%
where $\iota\{S\}$ is a characteristic function that is 0 if condition $S$ is satisfied and infinity otherwise. $V\!\! \succeq \!\! 0$ indicates that $V$ is positive semidefinite. $V, \, F, \, D_{i,j} \in \bbR^{n \times n}$ are symmetric matrices, $\inprod{X, Y} = \text{trace}(X^T Y)$ denotes the inner product of $X$ and $Y$, and $\mcD_i(X) = (\inprod{D_{i,1}, X}; \ldots; \inprod{D_{i,m_i}, X})$. 
%ADMM subproblems (\ref{eq:cadmms}, \ref{eq:cadmm2}) have closed form solution.
%\clearpage

\subsection{Experimental Setup}

We test the problems in
\cref{sec:app} with synthetic and real datasets. The number of samples and features are specified in \cref{tab:exp}. \emph{Synthetic1}
contains samples from a normal distribution, and \emph{Synthetic2} contains samples from a
mixture of $10$ random Gaussians. \emph{Synthetic2} is heterogeneous because
the data block on each individual node is sampled from only $1$ of the $10$
Gaussians.
%\gavin{I still don't understand what these datasets are, and can't write a replacement, since I don't know what's in them. What are the dimensions?  What are the targets?} 
We also acquire large
empirical datasets from the LIBSVM webpage \cite{liu2009large}, as well as
MNIST digital images \cite{lecun1998gradient}, and CIFAR10 object images
\cite{krizhevsky2009learning}. For binary classification tasks (SVM and logreg), we equally split the $10$ category labels of MNIST and CIFAR into ``positive'' and ``negative'' groups. We use a graph from the \textit{Seventh DIMACS Implementation
Challenge on Semidefinite and Related Optimization Problems}  following
\cite{burer2003nonlinear} for Semidefinite Programming (SDP).  The
regularization parameter is fixed at $\rho=10$ in all experiments.

Consensus ADMM (CADMM) \cite{boyd2011admm}, residual balancing (RB-ADMM)
\cite{he2000alternating}, adaptive ADMM (AADMM) \cite{xu2016adaptive}, and
consensus residual balancing (CRB-ADMM) \cite{song2015fast} are implemented
and reported for comparison. Hyper-parameters of these methods are set as
suggested by their creators. The initial penalty is fixed at $\tau_0 = 1$ for
all methods unless otherwise specified.

\subsection{Convergence results}
\cref{tab:exp} reports the convergence speed in iterations and wall-clock time
(secs) for various test cases. These experiments are performed with
128 cores on a Cray XC-30 supercomputer.  CADMM with default penalty $\tau=1$
\cite{boyd2011admm} is often slow to converge.  ACADMM outperforms the other
ADMM variants on all the real-world datasets, and is competitive with AADMM on
two homogeneous synthetic datasets where the curvature may be globally
estimated with a scalar.

ACADMM is more reliable than AADMM since the curvature estimation becomes
difficult for high dimensional variables. RB is relatively stable but
sometimes has difficulty finding the exact optimal penalty, as the
adaptation can stop because the difference of residuals are not significant
enough to trigger changes. RB does not change the initial penalty in several
experiments such as logistic regression on RCV1.  CRB achieves comparable
results with RB, which suggests that the relative sizes of local residuals may
not always be very informative. ACADMM significantly boosts AADMM and the
local curvature estimations are helpful in practice.

\subsection{Robustness and sensitivity}
\cref{fig:tau} shows that the practical convergence of ADMM is sensitive to the choice of penalty parameter. ACADMM is robust to the selection of the initial penalty parameter and achieves promising results for both homogeneous and heterogeneous data, comparable to ADMM with a fine-tuned penalty parameter.

We study scalability of the method by varying the number of workers and training samples (\cref{fig:enr}). ACADMM is fairly robust to the scaling factor. AADMM occasionally performs well when small numbers of nodes are used, while ACADMM is much more stable. RB and CRB are more stable than AADMM, but cannot compete with ACADMM. \cref{fig:svm} (bottom) presents the acceleration in (wall-clock secs) achieved by increasing the number of workers. 

Finally, ACADMM is insensitive to the safeguarding hyper-parameters,
correlation threshold $\gep^{\text{\scriptsize cor}}$ and convergence constant
$C_{\text{\scriptsize cg}}$. Though tuning these parameters may further
improve the performance, the fixed default values generally perform well in
our experiments and enable ACADMM to run without user oversight. In further
experiments in the supplementary material, we also show that ACADMM is fairly insensitive to the regularization parameter $\rho$ in our classification/regression models.

% \begin{figure}
%\centerline{
%\includegraphics[width=0.8\linewidth,]{}
%}
%\caption{\small Sensitivity of convergence iterations to hyper-parameter $\rho$ of EN regularizer .}
%\label{fig:reg}
% \end{figure} 
 
\section{Conclusion}
We propose ACADMM, a fully automated algorithm for distributed
optimization. Numerical experiments on various applications and real-world
datasets demonstrate the efficiency and robustness of ACADMM.  We also prove a
$O(1/k)$ convergence rate for ADMM with adaptive penalties under mild
conditions.
By automating the
selection of algorithm parameters, adaptive methods make distributed systems
more reliable, and more accessible to users that lack expertise in
optimization.

%\clearpage

\section*{Acknowledgements} 
ZX , GT, HL and TG were supported by the US Office of Naval Research under grant N00014-17-1-2078 and by the US National Science Foundation (NSF) under grant CCF-1535902. GT was partially supported by the DOD High Performance Computing Modernization Program. MF was partially supported by the Funda\c{c}\~{a}o para a Ci\^{e}ncia e Tecnologia, grant UID/EEA/5008/2013. XY was supported by the General Research Fund from Hong Kong Research Grants Council under grant HKBU-12313516. 

%\nocite{banert2016fixing}

\nocite{studer2014democratic}
\nocite{goldstein2010high}

\bibliography{admm}

\begin{thebibliography}{39}
\providecommand{\natexlab}[1]{#1}
\providecommand{\url}[1]{\texttt{#1}}
\expandafter\ifx\csname urlstyle\endcsname\relax
  \providecommand{\doi}[1]{doi: #1}\else
  \providecommand{\doi}{doi: \begingroup \urlstyle{rm}\Url}\fi

\bibitem[Banert et~al.(2016)Banert, Bot, and Csetnek]{banert2016fixing}
Banert, Sebastian, Bot, Radu~Ioan, and Csetnek, Ern{\"o}~Robert.
\newblock Fixing and extending some recent results on the admm algorithm.
\newblock \emph{arXiv preprint arXiv:1612.05057}, 2016.

\bibitem[Boyd et~al.(2011)Boyd, Parikh, Chu, Peleato, and
  Eckstein]{boyd2011admm}
Boyd, Stephen, Parikh, Neal, Chu, Eric, Peleato, Borja, and Eckstein, Jonathan.
\newblock Distributed optimization and statistical learning via the alternating
  direction method of multipliers.
\newblock \emph{Found. and Trends in Mach. Learning}, 3:\penalty0 1--122, 2011.

\bibitem[Burer \& Monteiro(2003)Burer and Monteiro]{burer2003nonlinear}
Burer, Samuel and Monteiro, Renato~DC.
\newblock A nonlinear programming algorithm for solving semidefinite programs
  via low-rank factorization.
\newblock \emph{Mathematical Programming}, 95\penalty0 (2):\penalty0 329--357,
  2003.

\bibitem[Chang \& Lin(2011)Chang and Lin]{chang2011libsvm}
Chang, Chih-Chung and Lin, Chih-Jen.
\newblock {LIBSVM}: a library for support vector machines.
\newblock \emph{ACM Transactions on Intelligent Systems and Technology (TIST)},
  2\penalty0 (3):\penalty0 27, 2011.

\bibitem[Chang et~al.(2016)Chang, Hong, Liao, and Wang]{chang2016asynchronous}
Chang, Tsung-Hui, Hong, Mingyi, Liao, Wei-Cheng, and Wang, Xiangfeng.
\newblock Asynchronous distributed alternating direction method of multipliers:
  Algorithm and convergence analysis.
\newblock In \emph{2016 IEEE International Conference on Acoustics, Speech and
  Signal Processing (ICASSP)}, pp.\  4781--4785. IEEE, 2016.

\bibitem[Davis \& Yin(2014)Davis and Yin]{davis2014faster}
Davis, Damek and Yin, Wotao.
\newblock Faster convergence rates of relaxed peaceman-rachford and admm under
  regularity assumptions.
\newblock \emph{arXiv preprint arXiv:1407.5210}, 2014.

\bibitem[Eckstein \& Bertsekas(1992)Eckstein and
  Bertsekas]{eckstein1992douglas}
Eckstein, Jonathan and Bertsekas, Dimitri.
\newblock On the {D}ouglas-{R}achford splitting method and the proximal point
  algorithm for maximal monotone operators.
\newblock \emph{Mathematical Programming}, 55\penalty0 (1-3):\penalty0
  293--318, 1992.

\bibitem[Fran{\c{c}}a \& Bento(2016)Fran{\c{c}}a and
  Bento]{francca2016explicit}
Fran{\c{c}}a, Guilherme and Bento, Jos{\'e}.
\newblock An explicit rate bound for over-relaxed admm.
\newblock In \emph{Information Theory (ISIT), 2016 IEEE International Symposium
  on}, pp.\  2104--2108. IEEE, 2016.

\bibitem[Gabay \& Mercier(1976)Gabay and Mercier]{gabay1976dual}
Gabay, Daniel and Mercier, Bertrand.
\newblock A dual algorithm for the solution of nonlinear variational problems
  via finite element approximation.
\newblock \emph{Computers \& Mathematics with Applications}, 2\penalty0
  (1):\penalty0 17--40, 1976.

\bibitem[Ghadimi et~al.(2015)Ghadimi, Teixeira, Shames, and
  Johansson]{ghadimi2015optimal}
Ghadimi, Euhanna, Teixeira, Andr{\'e}, Shames, Iman, and Johansson, Mikael.
\newblock Optimal parameter selection for the alternating direction method of
  multipliers: quadratic problems.
\newblock \emph{IEEE Trans. Autom. Control}, 60:\penalty0 644--658, 2015.

\bibitem[Giselsson \& Boyd(2016)Giselsson and Boyd]{giselsson2016linear}
Giselsson, Pontus and Boyd, Stephen.
\newblock Linear convergence and metric selection in douglas-rachford splitting
  and admm.
\newblock 2016.

\bibitem[Glowinski \& Marroco(1975)Glowinski and
  Marroco]{glowinski1975approximation}
Glowinski, Roland and Marroco, A.
\newblock Sur l'approximation, par {\'{e}}l{\'{e}}ments finis d'ordre un, et la
  r{\'{e}}solution, par p{\'{e}}nalisation-dualit{\'{e}} d'une classe de
  probl{\'{e}}mes de {Dirichlet} non lin{\'{e}}aires.
\newblock \emph{ESAIM: Modélisation Mathématique et Analyse Numérique},
  9:\penalty0 41--76, 1975.

\bibitem[Goldfarb et~al.(2013)Goldfarb, Ma, and Scheinberg]{goldfarb2013fast}
Goldfarb, Donald, Ma, Shiqian, and Scheinberg, Katya.
\newblock Fast alternating linearization methods for minimizing the sum of two
  convex functions.
\newblock \emph{Mathematical Programming}, 141\penalty0 (1-2):\penalty0
  349--382, 2013.

\bibitem[Goldstein \& Setzer(2010)Goldstein and Setzer]{goldstein2010high}
Goldstein, Tom and Setzer, Simon.
\newblock High-order methods for basis pursuit.
\newblock \emph{UCLA CAM Report}, pp.\  10--41, 2010.

\bibitem[Goldstein et~al.(2014)Goldstein, O'Donoghue, Setzer, and
  Baraniuk]{goldstein2014fast}
Goldstein, Tom, O'Donoghue, Brendan, Setzer, Simon, and Baraniuk, Richard.
\newblock Fast alternating direction optimization methods.
\newblock \emph{SIAM Journal on Imaging Sciences}, 7\penalty0 (3):\penalty0
  1588--1623, 2014.

\bibitem[Goldstein et~al.(2015)Goldstein, Li, and Yuan]{goldstein2015adaptive}
Goldstein, Tom, Li, Min, and Yuan, Xiaoming.
\newblock Adaptive primal-dual splitting methods for statistical learning and
  image processing.
\newblock In \emph{Advances in Neural Information Processing Systems}, pp.\
  2080--2088, 2015.

\bibitem[Goldstein et~al.(2016)Goldstein, Taylor, Barabin, and
  Sayre]{goldstein2016unwrapping}
Goldstein, Tom, Taylor, Gavin, Barabin, Kawika, and Sayre, Kent.
\newblock Unwrapping {ADMM:} efficient distributed computing via transpose
  reduction.
\newblock In \emph{AISTATS}, 2016.

\bibitem[He \& Yuan(2012)He and Yuan]{he2012con}
He, Bingsheng and Yuan, Xiaoming.
\newblock On the o(1/n) convergence rate of the douglas-rachford alternating
  direction method.
\newblock \emph{SIAM Journal on Numerical Analysis}, 50\penalty0 (2):\penalty0
  700--709, 2012.

\bibitem[He \& Yuan(2015)He and Yuan]{he2015non}
He, Bingsheng and Yuan, Xiaoming.
\newblock On non-ergodic convergence rate of {D}ouglas-{R}achford alternating
  direction method of multipliers.
\newblock \emph{Numerische Mathematik}, 130:\penalty0 567--577, 2015.

\bibitem[He et~al.(2000)He, Yang, and Wang]{he2000alternating}
He, Bingsheng, Yang, Hai, and Wang, Shengli.
\newblock Alternating direction method with self-adaptive penalty parameters
  for monotone variational inequalities.
\newblock \emph{Jour. Optim. Theory and Appl.}, 106\penalty0 (2):\penalty0
  337--356, 2000.

\bibitem[Kadkhodaie et~al.(2015)Kadkhodaie, Christakopoulou, Sanjabi, and
  Banerjee]{kadkhodaie2015accelerated}
Kadkhodaie, Mojtaba, Christakopoulou, Konstantina, Sanjabi, Maziar, and
  Banerjee, Arindam.
\newblock Accelerated alternating direction method of multipliers.
\newblock In \emph{Proceedings of the 21th ACM SIGKDD}, pp.\  497--506, 2015.

\bibitem[Krizhevsky \& Hinton(2009)Krizhevsky and
  Hinton]{krizhevsky2009learning}
Krizhevsky, Alex and Hinton, Geoffrey.
\newblock Learning multiple layers of features from tiny images.
\newblock 2009.

\bibitem[LeCun et~al.(1998)LeCun, Bottou, Bengio, and
  Haffner]{lecun1998gradient}
LeCun, Yann, Bottou, L{\'e}on, Bengio, Yoshua, and Haffner, Patrick.
\newblock Gradient-based learning applied to document recognition.
\newblock \emph{Proceedings of the IEEE}, 86\penalty0 (11):\penalty0
  2278--2324, 1998.

\bibitem[Lin et~al.(2011)Lin, Liu, and Su]{lin2011linearized}
Lin, Zhouchen, Liu, Risheng, and Su, Zhixun.
\newblock Linearized alternating direction method with adaptive penalty for
  low-rank representation.
\newblock In \emph{NIPS}, pp.\  612--620, 2011.

\bibitem[Liu \& Nocedal(1989)Liu and Nocedal]{liu1989limited}
Liu, Dong~C and Nocedal, Jorge.
\newblock On the limited memory bfgs method for large scale optimization.
\newblock \emph{Mathematical programming}, 45\penalty0 (1):\penalty0 503--528,
  1989.

\bibitem[Liu et~al.(2009)Liu, Chen, and Ye]{liu2009large}
Liu, Jun, Chen, Jianhui, and Ye, Jieping.
\newblock Large-scale sparse logistic regression.
\newblock In \emph{ACM SIGKDD}, pp.\  547--556, 2009.

\bibitem[Nishihara et~al.(2015)Nishihara, Lessard, Recht, Packard, and
  Jordan]{nishihara2015general}
Nishihara, R., Lessard, L., Recht, B., Packard, A., and Jordan, M.
\newblock A general analysis of the convergence of {ADMM}.
\newblock In \emph{ICML}, 2015.

\bibitem[Ouyang et~al.(2013)Ouyang, He, Tran, and Gray]{ouyang2013stochastic}
Ouyang, Hua, He, Niao, Tran, Long, and Gray, Alexander~G.
\newblock Stochastic alternating direction method of multipliers.
\newblock \emph{ICML (1)}, 28:\penalty0 80--88, 2013.

\bibitem[Raghunathan \& Di~Cairano(2014)Raghunathan and
  Di~Cairano]{raghunathan2014alternating}
Raghunathan, Arvind and Di~Cairano, Stefano.
\newblock Alternating direction method of multipliers for strictly convex
  quadratic programs: Optimal parameter selection.
\newblock In \emph{American Control Conf.}, pp.\  4324--4329, 2014.

\bibitem[Rockafellar(1970)]{Rockafellar}
Rockafellar, R.
\newblock \emph{Convex Analysis}.
\newblock Princeton University Press, 1970.

\bibitem[Song et~al.(2016)Song, Yoon, and Pavlovic]{song2015fast}
Song, Changkyu, Yoon, Sejong, and Pavlovic, Vladimir.
\newblock Fast {ADMM} algorithm for distributed optimization with adaptive
  penalty.
\newblock \emph{AAAI}, 2016.

\bibitem[Studer et~al.(2014)Studer, Goldstein, Yin, and
  Baraniuk]{studer2014democratic}
Studer, Christoph, Goldstein, Tom, Yin, Wotao, and Baraniuk, Richard~G.
\newblock Democratic representations.
\newblock \emph{arXiv preprint arXiv:1401.3420}, 2014.

\bibitem[Taylor et~al.(2016)Taylor, Burmeister, Xu, Singh, Patel, and
  Goldstein]{taylor2016training}
Taylor, Gavin, Burmeister, Ryan, Xu, Zheng, Singh, Bharat, Patel, Ankit, and
  Goldstein, Tom.
\newblock Training neural networks without gradients: A scalable {ADMM}
  approach.
\newblock \emph{ICML}, 2016.

\bibitem[Tian \& Yuan(2016)Tian and Yuan]{tian2016faster}
Tian, Wenyi and Yuan, Xiaoming.
\newblock Faster alternating direction method of multipliers with a worst-case
  o (1/$n^2$) convergence rate.
\newblock 2016.

\bibitem[Xu et~al.(2017{\natexlab{a}})Xu, Figueiredo, and
  Goldstein]{xu2016adaptive}
Xu, Zheng, Figueiredo, Mario~AT, and Goldstein, Tom.
\newblock Adaptive {ADMM} with spectral penalty parameter selection.
\newblock \emph{AISTATS}, 2017{\natexlab{a}}.

\bibitem[Xu et~al.(2017{\natexlab{b}})Xu, Figueiredo, Yuan, Studer, and
  Goldstein]{xu2017adaptive}
Xu, Zheng, Figueiredo, Mario~AT, Yuan, Xiaoming, Studer, Christoph, and
  Goldstein, Tom.
\newblock Adaptive relaxed {ADMM}: Convergence theory and practical
  implementation.
\newblock \emph{CVPR}, 2017{\natexlab{b}}.

\bibitem[Zhang \& Kwok(2014)Zhang and Kwok]{zhang2014asynchronous}
Zhang, Ruiliang and Kwok, James~T.
\newblock Asynchronous distributed {ADMM} for consensus optimization.
\newblock In \emph{ICML}, pp.\  1701--1709, 2014.

\bibitem[Zhou et~al.(2006)Zhou, Gao, and Dai]{zhou2006gradient}
Zhou, Bin, Gao, Li, and Dai, Yu-Hong.
\newblock Gradient methods with adaptive step-sizes.
\newblock \emph{Computational Optimization and Applications}, 35:\penalty0
  69--86, 2006.

\bibitem[Zou \& Hastie(2005)Zou and Hastie]{zou2005regularization}
Zou, Hui and Hastie, Trevor.
\newblock Regularization and variable selection via the elastic net.
\newblock \emph{Journal of the Royal Statistical Society: Series B (Statistical
  Methodology)}, 67\penalty0 (2):\penalty0 301--320, 2005.

\end{thebibliography}
\bibliographystyle{icml2017}

\end{document}